\newcommand{\rebuttal}[1]{\hl{#1}}
\newcommand\independent{\protect\mathpalette{\protect\independenT}{\perp}}
\def\independenT#1#2{\mathrel{\rlap{$#1#2$}\mkern2mu{#1#2}}}
\crefname{figure}{Fig.}{Figs.}
\crefname{definition}{Defn.}{Defns.}
\crefname{corollary}{Cor.}{Cors.}
\crefname{proposition}{Prop.}{Props.}
\crefname{theorem}{Thm.}{Thms.}
\crefname{remark}{Remark}{Remarks}
\crefname{principle}{Principle}{Principles}
\crefname{lemma}{Lemma}{Lemmata}
\crefname{claim}{Claim}{Claims}
\crefname{table}{Tab.}{Tabs.}
\crefname{section}{\S}{\S\S}
\crefname{subsection}{\S}{\S\S}
\crefname{subsubsection}{\S}{\S\S}
\crefname{assumption}{Asm.}{Asms.}
\crefname{appendix}{Appx.}{Appx.}
\numberwithin{equation}{section}
\numberwithin{theorem}{section}
\renewcommand{\baselinestretch}{0.995}
\setlist[itemize]{leftmargin=*,itemsep=0em}
\setlist[enumerate]{leftmargin=*,itemsep=0em}
\newcommand{\changelinkcolor}[1]{\hypersetup{linkcolor=#1}}  
\title{Causal Discovery in Heterogeneous Environments Under the Sparse Mechanism Shift Hypothesis}
\author[1]{Ronan Perry}
\author[1,2]{Julius von K\"ugelgen\thanks{Shared last author.} $\ $ }
\author[1]{Bernhard Sch\"olkopf $^*$}
\affil[1]{Max Planck Institute for Intelligent Systems, T\"ubingen, Germany}
\affil[2]{University of Cambridge, United Kingdom}
\affil[ ]{\texttt{rflperry@uw.edu}, \texttt{jvk@tue.mpg.de}, \texttt{bs@tue.mpg.de}}
\begin{document}
\doparttoc%
\faketableofcontents%

\maketitle
\begin{abstract}
    Machine learning approaches commonly rely on the assumption of independent and identically distributed (\textit{i.i.d.}) data.
    In reality, however, this assumption is almost always violated due to distribution shifts between environments.
    Although valuable learning signals can be provided by heterogeneous data from changing distributions, it is also known that learning under arbitrary (adversarial) changes is impossible.
    Causality provides a useful framework for modeling distribution shifts, since causal models encode both observational and interventional distributions.
    In this work, we explore the \textit{sparse mechanism shift hypothesis}, which posits that distribution shifts occur due to a \emph{small} number of changing causal conditionals.
    Motivated by this idea, we apply it to learning causal structure from heterogeneous environments, where i.i.d.\ data only allows for learning an equivalence class of graphs without restrictive assumptions.
    We propose the \textit{Mechanism Shift Score} (\MSS), a score-based approach amenable to various empirical estimators, which provably identifies the entire causal structure with high probability if the sparse mechanism shift hypothesis holds.
    Empirically, we verify behavior predicted by the theory and compare multiple estimators and score functions to identify the best approaches in practice.
    \looseness-1 Compared to other methods, we show how \MSS\ bridges a gap by both being nonparametric as well as explicitly leveraging sparse changes.
\end{abstract}

\section{Introduction}
Classical machine learning methods and theory presume data to be independently and identically distributed \textit{(i.i.d.)}. Although there has been huge success under this assumption, research on topics including adversarial examples~\citep{szegedy2014,goodfellow2015}, distribution shifts~\citep{quinonero2008dataset, scholkopf2012on, rojas2018invariant, eberhardt2008sufficient}, and  ``spurious'' correlations~\citep{beery2018recognition} has highlighted its fragility. 
Open questions remain as to how we can relax the i.i.d.\ assumption and still learn useful models, since learning under unrestricted adversarial distribution shifts seems infeasible~\citep{david2010impossibility}. Causal models naturally provide structure to a distribution via a factorization into causal \textit{mechanisms}, the processes by which variables are dependent on their direct causes. Hence, they are a natural basis for studying distribution shifts. Based on the idea of the independence of causal mechanisms~\citep{Pearl2000causality,scholkopf2012on}, the \textit{sparse mechanism shift hypothesis}~\citep{scholkopf2021towards} posits that distribution shifts are the result of changes in only a subset of the causal model's mechanisms. This presents a promising relaxation with many potential applications throughout machine learning~\citep{Locatello2020weakly, scholkopf2021towards, Kugelen2021selfsupervised, lippe2022citris,bengio2019meta}.

In particular, causal discovery---the inference of qualitative structure encoded by a graph and underlying causal relationships between variables---is a fundamental scientific problem with broad applications for which distribution shifts have been successfully leveraged.
Classical approaches, which assume i.i.d.\ data from a single environment or domain, can be broadly categorized into three classes. \textit{Constraint-based} methods, such as the PC~\citep{spirtes2001causation} and FCI~\citep{spirtes1999algorithm} algorithms, perform a series of statistical independence tests to identify the existence of causal relationships under various assumptions (e.g., causal faithfulness) and then use certain rules to infer causal directions.
\textit{Score-based} methods, such as GES~\citep{chickering2002optimal}, optimize some score function, such as the penalized likelihood BIC~\citep{schwartz1983bic}, over the set of possible graphs.
These methods all come with asymptotic \textit{structure identifiability} guarantees of recovering the (Markov) equivalence class~\citep{glymour2019review}, but they will rarely uniquely recover the true causal graph. Further work on \textit{functional-form-based} methods, beginning with non-Gaussian assumptions~\citep{shimizu2006linear}, provides various ways to recover a specific DAG by assuming a certain functional form of the causal mechanisms~\citep{hoyer2009nonlinear, zhang2009identifiability, peters2012identifiability, janzing2012information}, at the cost of potential misspecification.

Although identification is possible through actively specified interventions~\citep{eberhardt2005number,eberhardt2007interventions,he2008active,hauser2014two,shanmugam2015learning}, natural distribution shifts on the same variables across different environments allow for promising approaches under a relaxation of the i.i.d.\ assumption~\citep{eaton2007exact, ke2019learning, scholkopf2012on, lagani2012learning, peters2016causal, heinze2018invariant,he2016causal,GonZhaLiuTaoSch16,huang2017behind, ghassami2017learning, ghassami2018multi, huang2020causal, mooij2020joint, guo2022causal, lachapelle2022disentanglement}. Such \textit{multi-environment} methods can learn the direct causes of a specific variable~\citep{peters2016causal, heinze2018invariant} or the entire causal structure~\citep{ghassami2017learning, ghassami2018multi, huang2020causal} without requiring knowledge of which causal mechanisms change. \looseness-1 Yet, as noted in these work but not studied, performance is dependent on which and how many changes occur.

\textbf{Overview and contributions.}
After a review of causal graphical models and formalization of our setting~(\cref{sec:background}), we discuss key related work on multi-environment causal discovery~(\cref{sec:related_work}). We demonstrate that sparse distribution changes provide structure identifiability in the bivariate case, and useful information in multivariate settings~(\cref{sec:leveraging_changes}). Based on the observation that pairwise comparisons of environments better leverage sparse changes~(see~\cref{fig:oracle_sparse_pc}), we propose the \textit{Mechanism Shift Score}, a score-based causal discovery algorithm with theoretical guarantees~(\cref{sec:min_shift_criterion}). Empirical results confirm the theory~(\cref{sec:experiments}). In summary, the present work makes the following contributions:
\begin{itemize}
    \item We prove that by relaxing the i.i.d.\ assumption via the sparse mechanism shift assumption, bivariate causal structure is identifiable without parametric assumptions~(\cref{cor:bivariate_identifiability}).
    \item \looseness-1 We introduce the \textit{Mechanism Shift Score (\MSS)}, defined as the number of conditional distributions implied by a graph which change across all pairs of environments~(\cref{sec:min_shift_criterion}). We prove that the true causal (multivariate) graph minimizes the \MSS\
    over possible graphs~(\cref{prop:min_shift}).%
    \item \looseness-1 We provide rates of convergence showing that with a sufficient number of sparsely changing environments, the causal graph \emph{uniquely} minimizes the score function with high probability~(\cref{cor:identifiability}). Our rates 
    readily apply to existing literature on learning individual mechanisms~\citep{peters2016causal, heinze2018invariant} and the entire graph~\citep{ghassami2018multi, huang2020causal}, where a study of the role of sparsity was previously missing.
    \item We demonstrate empirically that sparsity and pairwise comparisons are useful and show how the \MSS\ accommodates various parametric~\citep{peters2016causal, ghassami2017learning, ghassami2018multi} and nonparametric~\citep{heinze2018invariant, huang2020causal} estimators~(\cref{sec:experiments}).
\end{itemize}

\begin{figure}[t]
    \vspace{-.75em}
    \centering
    \includegraphics[width=\textwidth]{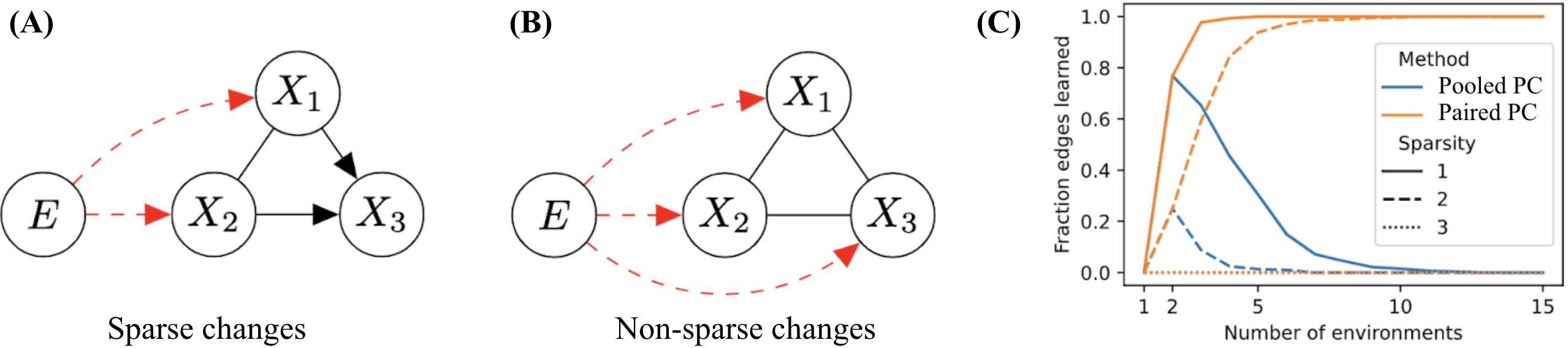}
    \caption{\small \textbf{Sparse shifts yield structure identifiability}. In a causal model with a fully connected DAG over $\Xb:=\{X_1,X_2,X_3\}$, no edge directions can be learned from i.i.d.\ observational data.
    Given multiple datasets on $\Xb$ across different environments with possible distribution shifts, the PC algorithm~\citep{spirtes2001causation} can be applied to the pooled data augmented by the environment index variable $E$~\citep{huang2020causal}. \textbf{(A)}~A mechanism shift $E \to X_i$ can allow orientation of some edges. \textbf{(B)}~Ddense shifts prohibit any orientations~\citep{huang2020causal}.
    \looseness-1 \textbf{(C)}~Combining results learned across \textit{pairs} of pooled environments leads to identifiability under \textit{sparse} shifts. Pooling all environments leads to dense shifts, even if pairs of environments differ sparsely; under \textit{dense} shifts, only the equivalence class is learned.}
    \label{fig:oracle_sparse_pc}
\end{figure}

\section{Problem setting and notation}\label{sec:background}

We start by building up the causal framework needed to understand our work and related literature. It relies on common graph-theoretic terminology which we review for completeness in~\cref{appendix:graph_theory}.

\textbf{Causal terminology.}
Causal relationships between variables are encoded in a causal graphical model (CGM) which links graphical and distributional properties via certain assumptions.

\begin{definition}[Causal Graphical Model (CGM)]
A causal graphical model (CGM) $\Mcal=(G, \PP_\Xb)$ over $d$~random variables $\Xb = \{X_1, ..., X_d\}$ consists of (i) a \textit{directed acyclic graph} (DAG)~$G$ with vertices~$\Xb$ and edges $X_i \to X_j$ iff $X_i$ is a direct cause of $X_j$, (ii) and a joint distribution $\PP_\Xb$ which factorizes (is \textit{Markovian}) over $G$.\footnote{Throughout, we assume the existence of densities with respect to the Lebesgue measure.}
Formally, we have the following \textit{Markov} or \textit{causal factorization}:
\begin{equation}
\label{eq:obs_dist_causal_factorisation}
    \PP_\Xb(X_1, ..., X_d)=\prod_{j=1}^d \PP_\Xb(X_j|\PA_j),
\end{equation}%
\looseness-1 where $\PA_j$ are the parents (direct causes) of 
$X_j$ in $G$
and $\PP_\Xb(X_j|\PA_j)$ is the \textit{causal mechanism} of~$X_j$.
\end{definition}

Implicit in the definition is the assumption that there is \textit{no hidden confounding}, i.e., that any common cause of two or more observed variables is included in $\Xb$; for this reason it is also referred to as \emph{causal sufficiency}. This is a strong assumption to make and important to keep in mind in applications.

\looseness-1 The Markov factorization~\eqref{eq:obs_dist_causal_factorisation} of the CGM encodes various conditional independences between variables. The \textit{Markov equivalence class (MEC)} is the set of DAGs which share the same set of conditional independence relations; graphically, it is the set of DAGs which share the same \textit{skeleton} (set of edges regardless of direction) and \textit{v-structures} ($X_i\to X_j\leftarrow X_k$, but $X_i\not\leftrightarrow X_k$)~\citep[Lem.~6.25]{peters2017elements}. A set of DAGs such as the MEC are commonly represented as a \textit{completed partially directed acyclic graph (CPDAG)} in which edges are directed if directed in all DAGs in the set, and otherwise left undirected~\citep{hauser2012characterization}. Incorrect DAGs in the MEC induce \textit{non-causal factorizations} containing \textit{non-causal conditionals} which differ from the true causal mechanisms. Furthermore, in a CGM, equation~\eqref{eq:obs_dist_causal_factorisation}
is equivalent to the \textit{global Markov condition} which states that for disjoint vertex sets $\Ab,\Bb,\Zb$ in $G$:
\begin{equation}\label{eq:global_Markov_property}
    \Ab \independent_{\!\!G}~\Bb~|~\Zb \implies \Ab \independent \Bb~|~\Zb
\end{equation}
\looseness-1
where $\independent_{\!\!G}$ denotes d-separation in~$G$~(\cref{appendix:graph_theory})~\cite[Thm.~6.22]{peters2017elements}. While the Markov property allows us to derive distributional properties from the DAG, causal discovery concerns deriving graphical properties from distributional properties.  This requires the \textit{causal faithfulness assumption}, effectively assuming that variables are not statistically independent unless implied by the graph.

\begin{assumption}[Causal faithfulness]\label{assmpt:causal_faithfulness}
The observational distribution $\PP_\Xb$ is said to be \textit{faithful} to the causal graph $G$ if every conditional independence relationship in $\PP_\Xb$ implies d-separation in $G$ (i.e., d-connection implies statistical dependence~\citep[Def.~6.33]{peters2017elements}).
\end{assumption}

\textbf{Multi-environment data.}
We assume that we observe a collection $\Dcal$ of datasets from a set $\Ecal$ of (possibly different) environments, where each dataset $\Dcal^e$ from environment $e$ contains $n_e$ independent and identically distributed (i.i.d.) observations from some joint distribution $\PP^e_\Xb$, 
\begin{equation*}
    \Dcal = \{\Dcal^1, ..., \Dcal^{n_{\Ecal}}\}
    \quad
    \mathrm{where}
    \quad
    \Dcal^e=\{\Xb^{e,1},\dots,\Xb^{e,n_e}\}
    \overset{\mathrm{i.i.d.}}{\sim} 
    \PP^e_\Xb
    \quad
    \mathrm{for}
    \quad
    e \in \Ecal = \{1, \dots, n_{\Ecal}\}.
\end{equation*}
Environments can encapsulate experimental settings, or broader contexts such as climate or time~\citep{mooij2020joint}. 
A key assumption of our setting is that environments arise from different ``soft'' interventions on an underlying shared CGM $\Mcal$. Specifically, the CGM $\Mcal$ entails a set of \textit{interventional distributions}, resulting from changing a subset of the \textit{mechanisms} $\PP_\Xb(X_j|\PA_j)$ to some different $\tilde{\PP}_\Xb(X_j|\PA_j)$.%
\footnote{Note that this includes ``hard'' interventions $\tilde{\PP}_\Xb(X_j~|~\PA_j)=\delta(X_j - x)$ which fix $X_j$ to a specific value $x$, and also ``soft'' interventions which merely alter the functional relationship.} 

\begin{assumption}[Shared mechanisms]\label{assumption:shared_mechs}
Each environment $e$ independently results from $\Mcal$ by intervening on an (unknown) subset $\Ical^e\subseteq[d]$ of mechanisms, i.e.,
$\tilde{\PP}^e_\Xb$ 
can be written as
\begin{equation}
    \tilde{\PP}_\Xb^e(X_1, ..., X_d) = 
    \left(\prod_{j\in \Ical^e} \tilde{\PP}^e_\Xb(X_j~|~\PA_j)\right)
    \prod_{j\in[d]\setminus \Ical^e}\PP_\Xb(X_j~|~\PA_j).
\end{equation}
\end{assumption}
We make the following common assumption about how these changes arise.
\begin{assumption}[Independent causal mechanisms (ICM)~\cite{Pearl2000causality, scholkopf2012on}]\label{assmpt:independent_mechanisms}
A change in $\PP(X_j~|~\PA_j)$ has no effect on and provides no information on $\PP(X_k~|~\PA_k)$ for any $k \neq j$.
\end{assumption}

Since causal mechanisms are fixed within an environment and potentially vary across them, we can create an \textit{augmented CGM} to unify CGMs from different environments. Note that in~\cref{assumption:shared_mechs} and the following definition, the same causal parents and DAG are preserved over different environments; the distribution changes are limited to soft interventions which do not change a variable's causal~parents.

\begin{definition}[Augmented CGM~\cite{huang2020causal}]\label{def:aug_cgm}
Let $\{(G, \PP_\Xb^e)\}_{e=1}^{n_{\Ecal}}$ be a collection of CGMs over the DAG $G=(\Xb, T)$ from multiple environments. The augmented CGM is defined as $\Mcal':=(G', \PP_{\Xb \cup E})$ where (i) $E$ is a random environment indicator variable with support $\Ecal$, and (ii) the augmented DAG $G'$ has vertices $\Xb \cup E$ and edge set $T \cup \{(E, X_j): \exists e, e' \in \Ecal \text{ s.t. } \PP^e_\Xb(X_j|\PA_j^G) \neq \PP^{e'}_\Xb(X_j|\PA_j^G)\}$.
\end{definition}

Note that $\PP_{\Xb \cup E}$ is Markovian to $G'$, inheriting the factorization from the underlying CGMs along with the added dependence on $E$. With respect to the augmented DAG, \cref{assmpt:independent_mechanisms} implies that existence of the edge $E \to X_i$ provides no information on the existence of an edge $E \to X_j$ for~$j \neq i$. As discussed by~\citet{huang2020causal}, since $E$ can be a common cause of variables in $\Xb$, causal sufficiency in the original CGM over $\Xb$ is violated. We instead must assume \textit{pseudo-causal sufficiency}. See~\cref{appendix:details} for further discussion of this assumption and the ICM principle.

\begin{assumption}[Pseudo causal sufficiency~\cite{huang2020causal}]\label{assumption:pseudo_suff}
Any unobserved confounders of variables in $\Xb$ can be written solely as functions of $E$. 
Thus, within any given environment $e$, all unobserved confounders are fixed and causal sufficiency holds.
\end{assumption}

\textbf{Sparse mechanism shift (SMS) hypothesis.}
Another key assumption, supported by~\cref{assmpt:independent_mechanisms}'s implication that a change to one mechanism does not imply changes to others, is the following:

\begin{assumption}[SMS~\citep{scholkopf2021towards}]\label{assmpt:sparse_change}
\looseness-1 Changes in mechanisms between observed environments are sparse:
\begin{equation}
    \textstyle
    0 < |\Ical^e| < d
\end{equation}
\end{assumption}

The value of this assumption when met is illustrated in~\cref{fig:oracle_sparse_pc} and will be elaborated upon in~\cref{sec:min_shift_criterion}.

\section{Related work on causal discovery from multiple environments}\label{sec:related_work}
\looseness-1 
Causal discovery from changing distributions and causal mechanisms has a long history, going back to Simon's {\em invariance criterion}, stating that the true causal order is the one that is invariant under the right sort of intervention~\citep{hoover90,Hoover06}. \citet{tian2001causal} infer a causal order by testing which marginal distributions change under a single intervention. Invariant causal prediction (ICP)~\citep{peters2016causal} can identify the causal parents of a target variable under the assumption that the target's causal mechanism is invariant across environments~\citep{scholkopf2012on,peters2016causal, heinze2018invariant}. However, applying ICP to each variable in order to learn all sets of causal parents and hence the causal graph is not immediately admissible: the invariance assumption would imply that all variables are invariant, and thus there are no mechanism changes to learn from.

\looseness-1 Learning the MEC from i.i.d.\ data is a well-studied problem, and under certain assumptions, essentially solved~\citep{spirtes1999algorithm, spirtes2001causation, chickering2002optimal}. \rebuttal{In our multi-environment setting, \mbox{\citet{yang_characterizing_2019}} characterized the $\Ical$-MEC, a subset of the MEC, which was subsquently generalized to a $\Psi$-MEC~\mbox{\citep{jaber_causal_2020}} under an unknown observational environment. \mbox{\citet{brouillard_differentiable_2020}} developed an estimator under unknown intervention targets, but none of these works implement an algorithm or experiments on soft and unknown interventions, and it is still an open question how best to learn the true causal DAG from the MEC.} \citet{ghassami2017learning} apply ideas from linear ICP to identify the causal DAG. Assuming \textit{linear} causal mechanisms in which only the noise distributions change, they compare pairs of environments and orient edges to meet this requirement. \citet{ghassami2018multi} allow for any kind of change within a \textit{linear} model. Based on the ICM principle, they demonstrate that non-causal DAGs induce a larger number of changes in the linear model parameters  than the causal DAG. By counting parameter changes across pairwise environments, they can determine the causal order of variables. \citet{huang2020causal} remove any functional restrictions through a two-stage approach. First, they use the PC algorithm on the augmented CGM, pooling all the data; this both identifies the MEC but more importantly can also orient additional edges. Since not all edges are guaranteed to be oriented, they propose a second stage, relying on a novel measure of mechanism dependence to individually orient remaining edges.

\looseness-1 A consistent yet sparingly explored theme across all of these methods is the \rebuttal{identifiability of the true DAG, rather than an equivalence class, and} impact of the sparsity of changes across environments. \rebuttal{Although both \mbox{\citet{yang_characterizing_2019}} and \mbox{\citet{jaber_causal_2020}} characterize and identify subsets of the MEC, requirements on the identifiability of the true DAG are unclear.} In ICP, it is briefly noted that the method is applicable when the target variable experiences sparse shifts, as long as we assume a maximum degree of sparsity~\citep[\S 6.2]{peters2016causal}. Although not discussed, the performance of methods from~\citet{ghassami2017learning,ghassami2018multi} also depend on sparsity. As mentioned by \citet{huang2020causal}, identifiability requires the invariance of some mechanisms ad their pooled PC approach cannot identify edge directions when both adjacent variables change. We will provide clarity on identifiability via sparsity.

\section{Leveraging sparse mechanism changes for causal discovery}\label{sec:leveraging_changes}
\looseness-1
The augmented DAG is a powerful tool for understanding the implications of changing causal mechanisms. We build up intuition in the bivariate setting and then consider the general multivariate setting. From now on, we assume causal faithfulness on the augmented CGM.

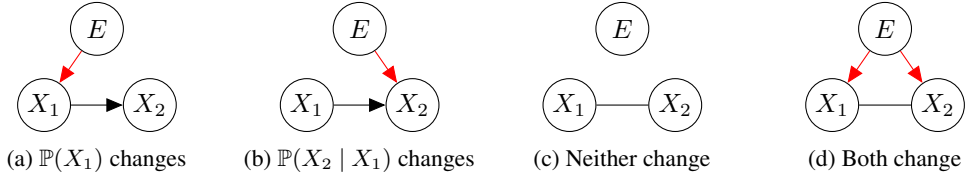
\begin{figure}[t]
    \centering
    \begin{subfigure}{0.25\textwidth}
        \centering
        \begin{tikzpicture}
        \newcommand{\xshift}{2em}
        \newcommand{\yshift}{2em}    
            \centering
            \node (E) [latent] {$E$};
            \node (PA) [latent, below=of E, xshift=-\xshift, yshift=\yshift] %
            {$X_1$};
            \node (X) [latent, below=of E, xshift=\xshift, yshift=\yshift] %
            {$X_2$};
            \edge{PA}{X};
            \edge[red]{E}{PA};
        \end{tikzpicture}
        \caption{$\PP(X_1)$ changes}
        \label{fig:prop11_proof_a}
    \end{subfigure}%
    \begin{subfigure}{0.25\textwidth}
        \centering
        \begin{tikzpicture}
        \newcommand{\xshift}{2em}
        \newcommand{\yshift}{2em}    
            \centering
            \node (E) [latent] {$E$};
            \node (PA) [latent, below=of E, xshift=-\xshift, yshift=\yshift] %
            {$X_1$};
            \node (X) [latent, below=of E, xshift=\xshift, yshift=\yshift] %
            {$X_2$};
            \edge{PA}{X};
            \edge[red]{E}{X};
        \end{tikzpicture}
        \caption{$\PP(X_2~|~X_1)$ changes}
        \label{fig:prop11_proof_b}
    \end{subfigure}%
    \begin{subfigure}{0.25\textwidth}
        \centering
        \begin{tikzpicture}
        \newcommand{\xshift}{2em}
        \newcommand{\yshift}{2em}    
            \centering
            \node (E) [latent] {$E$};
            \node (PA) [latent, below=of E, xshift=-\xshift, yshift=\yshift] %
            {$X_1$};
            \node (X) [latent, below=of E, xshift=\xshift, yshift=\yshift] %
            {$X_2$};
            \edge[-]{PA}{X};
        \end{tikzpicture}
        \caption{Neither change}
        \label{fig:prop11_no_changes}
    \end{subfigure}%
    \begin{subfigure}{0.25\textwidth}
        \centering
        \begin{tikzpicture}
        \newcommand{\xshift}{2em}
        \newcommand{\yshift}{2em}    
            \centering
            \node (E) [latent] {$E$};
            \node (PA) [latent, below=of E, xshift=-\xshift, yshift=\yshift] %
            {$X_1$};
            \node (X) [latent, below=of E, xshift=\xshift, yshift=\yshift] %
            {$X_2$};
            \edge[red]{E}{PA};
            \edge[red]{E}{X};
            \edge[-]{PA}{X}
        \end{tikzpicture}
        \caption{Both change}
        \label{fig:prop11_both_change}
    \end{subfigure}%
    \caption{\small (a, b) Visualizations of the two cases explored in the proof of \cref{prop:bivariate_changes}. In both cases, $X_2 \not\independent E$ unconditionally and $X_1 \not\independent E~|~X_2$ directly or by conditioning on the potential collider $X_2$. (c, d) The two other possible situations; neither allow us to orient the edge between $X_1$ and $X_2$.}
    \label{fig:prop11_proof}
\end{figure}

\textbf{Bivariate case.}
Given a causal model composed of two associated variables, identifiability of the causal direction requires assumptions such as the functional form or existence of targeted interventions~\citep[\S 4.1]{peters2017elements}. We show how sparsely changing mechanisms also provide identifiability.

\begin{proposition}[Both non-causal conditionals change]\label{prop:bivariate_changes}
\looseness-1 Consider the bivariate setting $X_1\to X_2$. If either causal mechanism $\PP(X_2~|~X_1)$ or $\PP(X_1)$ changes, then both $\PP(X_1~|~X_2)$ and $\PP(X_2)$ change.%
\end{proposition}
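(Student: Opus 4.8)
The plan is to reason entirely within the augmented CGM of \cref{def:aug_cgm}, translating ``a mechanism changes'' into the graphical statement ``there is an edge out of the environment node $E$,'' and then reading off the required (in)dependences via d-separation together with faithfulness. The two hypotheses correspond exactly to the two augmented DAGs of \cref{fig:prop11_proof}: if $\PP(X_1)$ changes there is an edge $E \to X_1$ (case a), whereas if $\PP(X_2 \mid X_1)$ changes there is an edge $E \to X_2$ (case b); in both, the causal edge $X_1 \to X_2$ is present. The target conclusions translate into $E$-dependences as well: because $E$ is the environment index, the non-causal marginal $\PP(X_2)$ changes across environments if and only if $X_2 \not\independent E$, and the non-causal conditional $\PP(X_1 \mid X_2)$ changes if and only if $X_1 \not\independent E \mid X_2$. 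So it suffices to establish these two dependences in each of the two cases.

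First I would treat $\PP(X_2)$. In case (a) the directed chain $E \to X_1 \to X_2$ is unblocked when nothing is conditioned on, and in case (b) the edge $E \to X_2$ is direct; either way $E$ and $X_2$ are d-connected, so faithfulness (\cref{assmpt:causal_faithfulness}) yields $X_2 \not\independent E$, i.e.\ $\PP(X_2)$ changes. Next I would treat $\PP(X_1 \mid X_2)$. In case (a) the direct edge $E \to X_1$ cannot be blocked by conditioning on $X_2$, so $X_1$ and $E$ remain d-connected given $X_2$. In case (b) the relevant path $X_1 \to X_2 \leftarrow E$ has $X_2$ as a collider; conditioning on $X_2$ opens it, again d-connecting $X_1$ and $E$. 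In both cases faithfulness gives $X_1 \not\independent E \mid X_2$, so $\PP(X_1 \mid X_2)$ changes. Combining the two parts shows that both non-causal conditionals change whenever a single causal mechanism does; the degenerate situation in which both causal mechanisms change is immediate, since then both $E \to X_1$ and $E \to X_2$ are present and each direct edge forces the corresponding dependence.

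The step I expect to carry the real weight is the appeal to faithfulness: the Markov property alone delivers only the easy direction (d-separation implies independence), whereas here I need d-connection to imply \emph{genuine} statistical dependence, i.e.\ that the induced change in a non-causal conditional is not fine-tuned away to exactly zero. Faithfulness on the augmented CGM is precisely what excludes such measure-zero cancellations. The one genuinely graph-theoretic subtlety is the collider argument in case (b): one must note that $X_2$, though a non-collider on the causal chain, becomes a collider on the path $X_1 \to X_2 \leftarrow E$ once the edge $E \to X_2$ is added, so that conditioning on it activates rather than blocks the path. Verifying the definitional equivalence ``conditional invariant across environments $\iff$ conditional independence from $E$'' is otherwise routine.
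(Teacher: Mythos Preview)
Your proposal is correct and essentially identical to the paper's proof: both work in the augmented CGM, translate ``mechanism changes'' into edges $E\to X_i$, exhibit the same d-connecting paths in the two cases (chain $E\to X_1\to X_2$ or direct edge for $\PP(X_2)$; direct edge or collider path $X_1\to X_2\leftarrow E$ for $\PP(X_1\mid X_2)$), and invoke faithfulness. The only cosmetic difference is that the paper phrases each step as a contradiction of faithfulness (``if it remained invariant, then $\dots$, which is unfaithful'') whereas you argue the contrapositive directly.
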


\begin{corollary}[Bivariate identifiability]\label{cor:bivariate_identifiability}
In the setting of \cref{prop:bivariate_changes}, if only one causal mechanism changes (sparsity), then the bivariate causal structure is identifiable.
\end{corollary}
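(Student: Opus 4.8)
The plan is to recast identifiability as a choice between the two DAGs in the (size-two) Markov equivalence class and to separate them by counting how many of their implied conditionals change. Under causal faithfulness on the augmented CGM and the fact that $X_1$ and $X_2$ are dependent (there is an edge, as in the setting of \cref{prop:bivariate_changes}), the skeleton is the single edge between $X_1$ and $X_2$, and there are no v-structures, so the MEC is exactly $\{X_1 \to X_2,\; X_2 \to X_1\}$. The only thing left undetermined by i.i.d.\ data is the orientation. I would therefore fix two environments that actually differ --- which exist because \cref{assmpt:sparse_change} enforces $|\Ical^e| > 0$ --- and compare the two candidate factorizations on this pair.

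First I would write out the two factorizations explicitly. The true direction $X_1 \to X_2$ factorizes as $\PP(X_1)\,\PP(X_2 \mid X_1)$, with mechanisms $\{\PP(X_1),\,\PP(X_2\mid X_1)\}$; the reversed direction $X_2 \to X_1$ factorizes as $\PP(X_2)\,\PP(X_1 \mid X_2)$, whose two conditionals are precisely the \emph{non-causal} conditionals named in \cref{prop:bivariate_changes}. Sparsity in the bivariate case means $d=2$, so $0<|\Ical^e|<2$ forces $|\Ical^e|=1$: exactly one of the two true mechanisms changes between the chosen environments, giving the true direction a change-count of exactly $1$.

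Next I would invoke \cref{prop:bivariate_changes}: since one of $\PP(X_2\mid X_1)$ or $\PP(X_1)$ changes, \emph{both} non-causal conditionals $\PP(X_1\mid X_2)$ and $\PP(X_2)$ change. But these are exactly the mechanisms of the reversed factorization, so the reversed direction has a change-count of exactly $2$. Comparing the two counts, $1 < 2$, the true direction is the \emph{unique} minimizer of the number of changing conditionals over the MEC, so the orientation --- and hence the full bivariate causal structure --- is identified, which is the statement of the corollary.

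I do not expect a genuine obstacle, since the substantive content is already carried by \cref{prop:bivariate_changes}; the corollary is essentially a counting argument over the two-element equivalence class. The two points requiring care are (i) checking that sparsity yields a count of exactly $1$ rather than $0$ for the true model --- guaranteed because \cref{assmpt:sparse_change} enforces at least one mechanism change, so the two environments are genuinely distinguishable --- and (ii) confirming that the reversed factorization's conditionals coincide verbatim with the non-causal conditionals of the proposition, so that its ``both change'' conclusion applies to them directly. This bivariate separation is the clean, exact special case of the general ``minimize the number of changing conditionals'' principle underlying the \MSS\ in \cref{sec:min_shift_criterion}.
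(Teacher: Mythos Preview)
Your proposal is correct and takes essentially the same approach as the paper: use \cref{prop:bivariate_changes} to show the reversed factorization has two changing conditionals while sparsity forces the true factorization to have exactly one, so the causal direction is the unique minimizer. The paper's proof is a two-sentence version of your argument; your additional care about the MEC having exactly two elements and about the sparsity bound yielding a count of exactly one (not zero) is sound but not strictly needed given the corollary's hypothesis already assumes one mechanism changes.
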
%

\begin{proof}(\cref{prop:bivariate_changes})
\looseness-1 We consider each case separately and use a proof by contradiction (of faithfulness).

    (i) If $\PP(X_1)$ changes (see \cref{fig:prop11_proof_a}), then $G_{\Xb \cup E}$ contains the edge $E \to X_1$. If $\PP(X_2)$ remained invariant, then $X_2 \independent E$ (unfaithful due to the unblocked path $E\to X_1\to X_2$). If $\PP(X_1~|~X_2)$ remained invariant, then $X_1 \independent E~|~X_2$ (unfaithful due to the direct path $E\to X_1$).

    (ii) If $\PP(X_2~|~X_1)$ changes (see \cref{fig:prop11_proof_b}), then $G_{\Xb \cup E}$ contains the edge $E \to X_2$. If $\PP(X_2)$ remained invariant, then $X_2 \independent E$ (unfaithful due to the direct path $E\to X_2$). If $\PP(X_1~|~X_2)$ remained constant, then $X_1 \independent E~|~X_2$ (unfaithful due to the unblocked collider path $E\to X_2 \leftarrow X_1$).
\end{proof}
\begin{proof}(\cref{cor:bivariate_identifiability})
If either causal mechanism changes, by~\cref{prop:bivariate_changes} both conditionals in the non-causal factorization change. Hence, the causal structure is the one with only one mechanism change.
\end{proof}
\textbf{Multivariate case.}
Non-causal conditional distributions of $X_j$ may change across environments even if the causal mechanism $\PP(X_j~|~\PA_j)$ does not change.
This occurs if the conditioning set leaves open a dependence between $E$ and $X_j$ in~$G_{\Xb \cup E}$.

\begin{restatable}{lemma}{lemFlip}\label{lem:edge_flip}
For any $X_j \in \Xb$  and set $\Zb \subseteq \Xb\setminus\{X_j\}$, the conditional distribution $\PP(X_j~|~\Zb)$ changes if and only if the following d-connectedness relationship holds:
\begin{equation*}
X_j \not\independent_{\!\!G_{\Xb \cup E}}~E~|~\Zb\,.
\end{equation*}%
\end{restatable}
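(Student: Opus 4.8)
The plan is to reduce the claim to the textbook equivalence between statistical dependence and $d$-connection, which is available to us because we have assumed both the global Markov property and faithfulness on the augmented CGM. The only genuinely new ingredient is a translation step: I first want to argue that the phrase ``$\PP(X_j\mid\Zb)$ changes'' is, by the very construction of the augmented model, identical to the statistical-dependence statement $X_j\not\independent E\mid\Zb$ in $\PP_{\Xb\cup E}$. Once that identification is in hand, the lemma follows from Markov and faithfulness in two one-line implications.

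For the translation step I would use that conditioning the augmented law on the environment indicator recovers each per-environment law, $\PP_{\Xb\cup E}(\cdot\mid E=e)=\PP^e_\Xb$, so that the conditional of $X_j$ given $\Zb$ in environment $e$ is exactly $\PP_{\Xb\cup E}(X_j\mid\Zb,E=e)$. Then ``$\PP(X_j\mid\Zb)$ changes'' means these differ for some pair $e,e'\in\Ecal$, which is equivalent to $\PP_{\Xb\cup E}(X_j\mid\Zb,E)$ genuinely depending on $E$, i.e.\ to $X_j\not\independent E\mid\Zb$; conversely, invariance of the conditional in $e$ gives $X_j\independent E\mid\Zb$.

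For the second step I would invoke that $\PP_{\Xb\cup E}$ is Markovian with respect to $G_{\Xb\cup E}$ (noted just after \cref{def:aug_cgm}), so that the global Markov property~\eqref{eq:global_Markov_property} yields $X_j\independent_{\!\!G_{\Xb\cup E}}E\mid\Zb\Rightarrow X_j\independent E\mid\Zb$, while faithfulness on the augmented CGM (assumed throughout this section) yields the converse implication. Taking contrapositives and chaining with the translation step gives the stated biconditional $\PP(X_j\mid\Zb)\text{ changes}\iff X_j\not\independent_{\!\!G_{\Xb\cup E}}E\mid\Zb$.

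The hard part will be the translation step rather than the Markov/faithfulness step: I must make sure the informal notion of a conditional ``changing'' is captured exactly by non-independence, which leans on the footnoted assumption that densities with respect to Lebesgue measure exist and on $E$ having support on all of $\Ecal$ (so each environment is realized with positive probability), and requires ruling out a change confined to a $\Zb$-null set. The second step, by contrast, is a direct and routine application of assumptions already imposed on $\PP_{\Xb\cup E}$.
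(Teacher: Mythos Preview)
Your proposal is correct and follows essentially the same route as the paper: identify ``$\PP(X_j\mid\Zb)$ changes'' with $X_j\not\independent E\mid\Zb$ in the augmented distribution, then invoke the global Markov property (contrapositive) for the forward direction and faithfulness for the reverse. The paper's proof is in fact terser than yours---it treats your translation step as immediate and does not discuss the measure-theoretic caveats you raise---so your version is a strictly more careful rendering of the same argument.
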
%
The result follows from the Markov property and faithfulness; for all complete proofs, see~\cref{appendix:full_proofs}.

Since the Markov equivalence class is relatively easily available and thus often the starting point of open questions in causal discovery, we specify the implications of \cref{lem:edge_flip} in this setting. Note that due to the shared skeleton of all DAGs in the equivalence class, the conditioning set for any $X_j$ in any DAG in the MEC is a subset of $X_j$'s true parents and children $\PA_j^G \cup \CH_j^G$~\citep{spirtes2001causation, lachapelle2022disentanglement}.

\begin{figure}[t]
    \vspace{-0.75em}
    \centering
    \captionsetup[subfigure]{labelformat=empty}
    \newcommand{\xshift}{2em}
    \newcommand{\yshift}{2.5em}  
    \begin{subfigure}{0.3\textwidth}
        \centering
        \begin{tikzpicture}
            \centering
            \node (X) [latent] {$X_j$};
            \node (PA) [latent, above=of X, xshift=-\xshift, yshift=-\yshift, inner sep=0em] 
            {\scriptsize$\PA_j^{\setminus \Zb}$};
            \node (PAZ) [latent, above=of X, xshift=\xshift, yshift=-\yshift, fill={rgb:black,1;white,8}, inner sep=0em]
            {\scriptsize$\PA_j^{\cap \Zb}$};
            \node (CH) [latent, below=of X, xshift=\xshift, yshift=\yshift, inner sep=0em] 
            {\scriptsize$\CH_j^{\setminus \Zb}$};
            \node (CHZ) [latent, below=of X, xshift=-\xshift, yshift=\yshift, fill={rgb:black,1;white,8}, inner sep=0em]
            {\scriptsize$\CH_j^{\cap \Zb}$};
            \node (E) [latent, left=of X, xshift=-0.5\xshift]
            {$E$};
            \edge[]{PAZ}{X};
            \edge[]{PA}{X}
            \edge[]{X}{CHZ};
            \edge[]{X}{CH};
            \edge[red]{E}{X};
        \end{tikzpicture}
        \caption{(i): a direct cause}
    \end{subfigure}%
    \begin{subfigure}{0.3\textwidth}
        \centering
        \begin{tikzpicture}
            \centering
            \node (X) [latent] {$X_j$};
            \node (PA) [latent, above=of X, xshift=-\xshift, yshift=-\yshift, inner sep=0em] 
            {\scriptsize$\PA_j^{\setminus \Zb}$};
            \node (PAZ) [latent, above=of X, xshift=\xshift, yshift=-\yshift, fill={rgb:black,1;white,8}, inner sep=0em]
            {\scriptsize$\PA_j^{\cap \Zb}$};
            \node (CH) [latent, below=of X, xshift=\xshift, yshift=\yshift, inner sep=0em] 
            {\scriptsize$\CH_j^{\setminus \Zb}$};
            \node (CHZ) [latent, below=of X, xshift=-\xshift, yshift=\yshift, fill={rgb:black,1;white,8}, inner sep=0em]
            {\scriptsize$\CH_j^{\cap \Zb}$};
            \node (E) [latent, left=of X, xshift=-0.5\xshift]
            {$E$};
            \edge[]{PAZ}{X};
            \edge[]{PA}{X}
            \edge[]{X}{CHZ};
            \edge[]{X}{CH};
            \edge[red,dashed]{E}{PA};
        \end{tikzpicture}
        \caption{(ii): an unblocked ancestor}
    \end{subfigure}%
    \begin{subfigure}{0.34\textwidth}
        \centering
        \begin{tikzpicture}
            \centering
            \node (X) [latent] {$X_j$};
            \node (PA) [latent, above=of X, xshift=-\xshift, yshift=-\yshift, inner sep=0em] 
            {\scriptsize$\PA_j^{\setminus \Zb}$};
            \node (PAZ) [latent, above=of X, xshift=\xshift, yshift=-\yshift, fill={rgb:black,1;white,8}, inner sep=0em]
            {\scriptsize$\PA_j^{\cap \Zb}$};
            \node (CH) [latent, below=of X, xshift=\xshift, yshift=\yshift, inner sep=0em] 
            {\scriptsize$\CH_j^{\setminus \Zb}$};
            \node (CHZ) [latent, below=of X, xshift=-\xshift, yshift=\yshift, fill={rgb:black,1;white,8}, inner sep=0em]
            {\scriptsize$\CH_j^{\cap \Zb}$};
            \node (E) [latent, left=of X, xshift=-0.5\xshift]
            {$E$};
            \edge[]{PAZ}{X};
            \edge[]{PA}{X}
            \edge[]{X}{CHZ};
            \edge[]{X}{CH};
            \edge[red, dashed]{E}{CHZ};
        \end{tikzpicture}
        \caption{(iii): a conditioned child's ancestor}
    \end{subfigure}%
    \caption{\small The three possible cases from \cref{cor:edge_flip} with a d-connecting path from $E$ to~$X_j$, conditioned on a subset of neighbors (colored in grey) in the MEC; these neighbors are a subset of the true parents and children.}
    \label{fig:3cases_viz}
\end{figure}
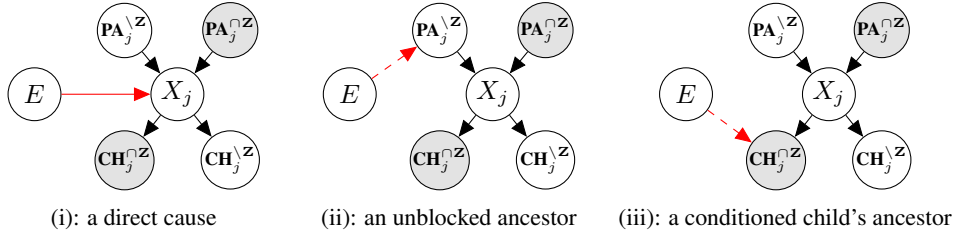

\begin{restatable}{corollary}{corFlip}\label{cor:edge_flip}
For any variable $X_j \in \Xb$  and set $\Zb \subseteq (\PA_j^G \cup \CH_j^G)$ in the augmented graph, the conditional distribution $\PP(X_j~|~\Zb)$ changes if and only if at least one of the following holds:

    (i) $E \to X_j$ [a direct cause].
    
    (ii) $\exists W_\PA \in \PA_j^G \setminus \Zb$ such that
    $W_\PA \not\independent_{\!\!G_{\Xb \cup E}}~E~|~\Zb$
    [unblocked path to unconditioned parent].
    
    (iii) $\exists W_\CH \in \CH_j^G \cap \Zb$ such that $W_\CH \not\independent_{\!\!G_{\Xb \cup E}}~E~|~\Zb\setminus W_\CH$
    [unblocked path to conditioned child].
\end{restatable}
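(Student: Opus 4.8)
The plan is to reduce everything to \cref{lem:edge_flip}, which already equates the change of $\PP(X_j\mid\Zb)$ with the d-connection statement $X_j \not\independent_{\!\!G_{\Xb \cup E}}~E~|~\Zb$. It then suffices to prove that, \emph{under the extra hypothesis $\Zb \subseteq \PA_j^G \cup \CH_j^G$}, this d-connection holds if and only if one of (i)--(iii) holds. Two structural facts drive the whole argument and I would establish them first: $E$ is a source in $G_{\Xb \cup E}$ (it has no incoming edges), so every path out of $E$ begins with an outgoing arrow; and since $X_j \notin \Zb$, the only neighbours of $X_j$ are $E$ (if $E \to X_j$), its parents $\PA_j^G$, and its children $\CH_j^G$, so any path reaching $X_j$ must enter through one of these. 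The proof is then a case analysis on the penultimate node of a path from $E$ to $X_j$, together with the converse constructions.

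The two easy situations are a direct cause and a parent. For the \emph{only if} direction I would examine the penultimate node of any open path $\pi$ from $E$ to $X_j$ given $\Zb$. If it is $E$ itself, then $\pi = E \to X_j$ and case~(i) holds. If it is a parent $W_\PA \in \PA_j^G$, the final edge $W_\PA \to X_j$ leaves $W_\PA$, so $W_\PA$ is necessarily a non-collider on $\pi$, and openness forces $W_\PA \notin \Zb$; the prefix of $\pi$ then witnesses $W_\PA \not\independent_{\!\!G_{\Xb \cup E}}~E~|~\Zb$, giving case~(ii). For the \emph{if} direction in these two cases I would simply read off the length-one path $E \to X_j$, or append $W_\PA \to X_j$ to an open path from $E$ to an unconditioned parent $W_\PA \in \PA_j^G \setminus \Zb$, noting once more that the outgoing edge keeps $W_\PA$ a non-collider and that $W_\PA \notin \Zb$ does not block it.

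The main obstacle is the child case, and this is where I expect to spend the effort. The difficulty is twofold: a d-connecting path may enter $X_j$ through a child via a collider whose opener is not the conditioned child itself but merely a conditioned \emph{descendant}, and conversely the witness $W_\CH$ supplied by (iii) need not be reached from $E$ through an arrowhead, in which case naively appending $X_j \to W_\CH$ would make $W_\CH$ a blocked non-collider. The resolving observation is acyclicity: any descendant of a child of $X_j$ that lies in $\Zb \subseteq \PA_j^G \cup \CH_j^G$ cannot be a parent of $X_j$, since that would close a directed cycle through $X_j$, and hence must itself be a \emph{conditioned child}. Using this, given any open path entering $X_j$ through a child I would walk along the relevant directed segment emanating from $X_j$ to the \emph{nearest} node of $\Zb$; acyclicity forces that node to be a conditioned child $W_\CH$, the segment up to it has no internal node in $\Zb$, and splicing it onto the appropriate open prefix from $E$ yields a path that enters $W_\CH$ through an arrowhead. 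This exhibits a conditioned child with $W_\CH \not\independent_{\!\!G_{\Xb \cup E}}~E~|~\Zb\setminus W_\CH$, i.e.\ case~(iii).

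For the converse of (iii) I would run the same construction in reverse: starting from an open path realizing $W_\CH \not\independent_{\!\!G_{\Xb \cup E}}~E~|~\Zb\setminus W_\CH$, append $X_j \to W_\CH$ so that the conditioned collider at $W_\CH \in \Zb$ opens the path to $X_j$, again descending to the nearest conditioned child if the supplied witness is entered from the wrong side. The cleanest way to make this descent rigorous, and to confirm that (i)--(iii) are exhaustive, is a minimal-counterexample (shortest-path) argument: on a shortest open path the nearest-conditioned-child reduction cannot strictly shorten the path, which pins down the penultimate-node structure to exactly the three templates depicted in \cref{fig:3cases_viz}.
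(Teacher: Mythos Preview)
Your proposal is correct and follows essentially the same route as the paper: reduce to \cref{lem:edge_flip}, then perform a case analysis on the penultimate node of an open path from $E$ to $X_j$, using acyclicity to argue that any conditioned descendant opening a child-collider must itself be a conditioned child rather than a parent. Your treatment is in fact a bit more careful than the paper's about the converse of case~(iii) (the possibility that the witnessing path enters $W_\CH$ via an outgoing edge), which the paper handles only implicitly through its sprinkled ``iff'' statements.
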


\begin{proof}[Proof sketch]
These cases are visualized in~\cref{fig:3cases_viz}. The global Markov property and faithfulness assumption allow us to interchange d-connection and a distributional change. In the forward direction, a changing mechanism implies a d-connecting path which is necessarily captured in one of the three cases. In the reverse direction, each case opens a d-connecting path between $X_j$ and $E$.
\end{proof}
\vspace{-0.5em}
A direct result of changing causal mechanisms and conditional distributions follows.
\begin{restatable}{corollary}{corShif}\label{cor:true_shift}
Let $G^*$ denote the true (unaugmented) DAG and let $G$ be any other DAG over the same variables. For any $X_j \in \Xb$, a change in $\PP(X_j~|~\PA_j^{G^*})$ implies a change in $\PP(X_j~|~\PA_j^{G})$.
\end{restatable}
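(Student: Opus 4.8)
The plan is to route everything through the augmented DAG $G_{\Xb\cup E}$ and the equivalence between distributional changes and d-connection already established in \cref{lem:edge_flip}. Since the statement has the shape ``true mechanism changes $\implies$ conditional in another graph changes,'' I would first translate the hypothesis into a purely graphical statement about $G_{\Xb\cup E}$, then translate the desired conclusion back into a distributional statement, and connect the two entirely at the level of d-separation.

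First I would observe that by \cref{def:aug_cgm} a change in the true causal mechanism $\PP(X_j\mid\PA_j^{G^*})$ across environments is, by definition, precisely the condition under which the augmented DAG contains the edge $E\to X_j$. (Equivalently, one may apply \cref{lem:edge_flip} with $\Zb=\PA_j^{G^*}$.) Thus the hypothesis hands me a direct edge $E\to X_j$ in $G_{\Xb\cup E}$.

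For the conclusion I would then apply \cref{lem:edge_flip} in the reverse direction with conditioning set $\Zb=\PA_j^{G}$, the parents of $X_j$ in the alternative graph: it suffices to exhibit the d-connection $X_j\not\independent_{\!\!G_{\Xb\cup E}} E\mid \PA_j^{G}$, which the lemma converts into the change in $\PP(X_j\mid\PA_j^{G})$ we seek. The crux is the elementary fact that a single directed edge is a d-connecting path that cannot be blocked by conditioning on any vertex set not containing either of its endpoints. Here the endpoints are $E$ and $X_j$; since $X_j\notin\PA_j^{G}$ (a vertex is never its own parent) and $E\notin\PA_j^{G}$ ($E$ is not among the original variables $\Xb$), the set $\PA_j^{G}$ contains neither endpoint, so the edge $E\to X_j$ stays open and $E$ and $X_j$ are d-connected given $\PA_j^{G}$.

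I do not anticipate a genuine obstacle: once \cref{lem:edge_flip} is available the argument is a short deduction whose only moving part is the unblockable-direct-edge observation. The one point I would state carefully is that $\PA_j^{G}$ is the parent set in an \emph{arbitrary} other DAG and need not be a subset of the true parents and children, so I would invoke the general \cref{lem:edge_flip} rather than the MEC-specialized \cref{cor:edge_flip}; phrased in the language of that corollary, the situation always falls into case (i), the direct-cause case.
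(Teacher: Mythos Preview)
Your proposal is correct and follows essentially the same route as the paper: translate the hypothesis into the edge $E\to X_j$ in the augmented DAG, observe that this direct edge is d-connecting regardless of which subset of $\Xb\setminus\{X_j\}$ is conditioned on, and invoke \cref{lem:edge_flip} with $\Zb=\PA_j^{G}$. Your added care in noting that neither endpoint lies in $\PA_j^{G}$ and in justifying why the general \cref{lem:edge_flip} (rather than \cref{cor:edge_flip}) is the right tool here is a welcome explicitness beyond the paper's terse version.
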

\vspace{-0.5em}
\begin{proof}
Under the true causal parents given by $G^*$, a change in mechanism occurs if and only if $E \to X_j$ in $G^*_{\Xb \cup E}$. So, $X_j$ is d-connected to $E$, no matter the conditioning set $\Zb \subset \Xb\setminus\{X_j\}$. Thus, by \cref{lem:edge_flip}, $\PP(X_j~|~\PA_j^{G})$ necessarily changes.
\end{proof}

\section{Causal discovery via the Mechanism Shift Score (\MSS)}\label{sec:min_shift_criterion}
We have established that changing mechanisms provide useful information and by \cref{cor:bivariate_identifiability} can provide identifiability in the bivariate case under faithfulness and sparse changes. We now study identifiability in more general graphs along with approaches to developing practical estimators. \looseness-1 Motivated by the novel discovery of the value of comparing pairwise environments if changes are sparse, we propose the \textit{Mechanism Shift Score (\MSS)} estimand with useful theoretical guarantees over DAGs in the MEC.

\textbf{The \MSS\ estimand.}
Given a set~$\Gcal$ of candidate DAGs over the same $d$ variables $\Xb = \{X_1, ..., X_d\}$ with data sampled from $n_{\Ecal}$ distributions $\Pcal := \{\PP_\Xb^e\}_{e=1}^{n_{\Ecal}}$, we count the number of changing conditional distributions in a graph $G \in \Gcal$ by defining
\begin{equation*}\label{eq:min_shift_estimand}
    \MSS_j(G; \Pcal) = \sum_{e'>e}^{n_{\Ecal}} \II\left[\PP^e(X_j|\PA_j^G) \neq \PP^{e'}(X_j|\PA_j^G)\right]
    \quad
    \text{and}
    \quad
    \MSS(G; \Pcal) = \sum_{j=1}^d \MSS_j(G; \Pcal).
\end{equation*}
$\MSS_j(G; \Pcal)$ is the number of pairs of environments in which the conditional distribution of $X_j$ in~$G$ changes; $\MSS(G; \Pcal)$ is the total number of changes across all variables and pairs of environments according to the Markov factorization implied by~$G$. It follows from \cref{cor:true_shift} that the true DAG $G^*$ minimizes (not necessarily uniquely) the number of changing conditionals among all DAGs. 
\begin{proposition}\label{prop:min_shift}
Let $G^*$ be the true DAG in the set~$\Gcal$ of DAGs. Then for all $G \in \Gcal$ and $j \in \{1,\dots,d\}$
\begin{equation*}
    \MSS_j(G^*; \Pcal) \leq \MSS_j(G; \Pcal)
    \qquad\text{and thus}\qquad
    \MSS(G^*; \Pcal) \leq \MSS(G; \Pcal).
\end{equation*}
\end{proposition}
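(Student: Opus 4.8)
The plan is to derive both inequalities as an immediate, term-by-term consequence of \cref{cor:true_shift}. Since $\MSS(G;\Pcal)=\sum_{j=1}^d \MSS_j(G;\Pcal)$ is a sum of non-negative counts, the global inequality $\MSS(G^*;\Pcal)\le\MSS(G;\Pcal)$ follows by summing the per-variable inequalities over $j$; it therefore suffices to prove $\MSS_j(G^*;\Pcal)\le\MSS_j(G;\Pcal)$ for each fixed $j$. Each of these is itself a sum over unordered pairs of environments, so I would reduce further to showing that the summand for $G^*$ is dominated by the summand for $G$ pairwise, i.e.
\begin{equation*}
\II\!\left[\PP^e(X_j|\PA_j^{G^*})\neq\PP^{e'}(X_j|\PA_j^{G^*})\right]
\le
\II\!\left[\PP^e(X_j|\PA_j^{G})\neq\PP^{e'}(X_j|\PA_j^{G})\right]
\end{equation*}
for every pair $e'>e$. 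Summing these pointwise inequalities over pairs and then over $j$ then yields the claim directly.

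To establish the pointwise domination, fix a pair $(e,e')$ and a variable $X_j$, and suppose the left-hand indicator equals $1$ (otherwise the inequality is trivial), i.e.\ the true mechanism $\PP(X_j|\PA_j^{G^*})$ genuinely differs between environments $e$ and $e'$. I would instantiate the augmented CGM of \cref{def:aug_cgm} over just these two environments, so that the indicator variable $E$ ranges over $\{e,e'\}$; in this restricted model, a change in the true mechanism is exactly the presence of the edge $E\to X_j$. \cref{cor:true_shift} then applies verbatim: a change in $\PP(X_j|\PA_j^{G^*})$ forces a change in $\PP(X_j|\PA_j^{G})$ for any other DAG $G$ over the same variables, so the right-hand indicator is also $1$. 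This gives the desired pairwise domination.

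The one point that needs care is the reduction from the augmented-CGM formulation of \cref{cor:true_shift} to the pairwise counts defining the \MSS. The corollary is phrased for a single environment indicator $E$ summarizing all of $\Ecal$, whereas the score compares environments two at a time; the clean bridge is to apply the corollary to the two-environment augmented model for each pair separately, as above. This is legitimate provided faithfulness (\cref{assmpt:causal_faithfulness}) holds for each such two-environment augmented CGM, which I would note is part of our standing assumption on the augmented model. Everything else is bookkeeping: the indicators are monotone in the set of changing conditionals, and summing non-negative terms preserves the inequality, so I do not expect any genuine difficulty beyond making the pairwise instantiation explicit.
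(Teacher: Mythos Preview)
Your proposal is correct and follows essentially the same approach as the paper: both reduce to the termwise implication furnished by \cref{cor:true_shift}, namely that a change in $\PP(X_j\mid\PA_j^{G^*})$ forces a change in $\PP(X_j\mid\PA_j^{G})$, and then sum over pairs and variables. Your version is simply more explicit about instantiating the corollary on each two-environment augmented CGM, a detail the paper's proof leaves implicit.
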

\vspace{-0.5em}
\begin{proof}
By \cref{cor:true_shift}, any change in $\PP(X_j|\PA_j^{G^*})$ implies a change in $\PP(X_j|\PA_j^{G})$ for any other DAG~$G$. Thus, any change counted by $\MSS_j$ on the true DAG will also be detected in every other DAG and so both lower bounds hold.
\end{proof}
\vspace{-0.5em}
\cref{prop:min_shift} can be viewed as the generalization of the Principle of Minimal Changes~\citep{ghassami2018multi}, allowing for mechanism changes beyond the parametric restrictions of a linear model. Identifiability, however, requires us to establish a discerning aspect of the true structure. Recall that the Markov equivalence class~$\Gcal_{\MEC}$ is identifiable. We define the subset
\begin{equation*}
    \Gcal_{\MEC}^{\min} := \argmin_{G \in \Gcal_{\MEC}} \, \MSS(G; \Pcal)
\end{equation*}
of DAGs with minimum \MSS, here defined as the number of mechanism shifts across environments. \rebuttal{It turns out that $\Gcal_{\MEC}^{\min}$ is a $\Psi$-MEC as characterized by \mbox{\citet{jaber_causal_2020}}, although we leave the details and proof of that to~\mbox{\cref{appendix:full_proofs}} and focus our results on identifiability of the true DAG.} In practice, we may employ any generic conditional test for change in mechanism or choose to use a ``softer'' score (e.g., based on \textit{p}-values) to quantify changes along a continuous spectrum. \rebuttal{This is a similar idea to that of~\mbox{\citet{brouillard_differentiable_2020}} who propose a multi-environment likelihood-based approach.}

\cref{prop:min_shift} implies that $G^* \in \Gcal_{\MEC}^{\min}$. \looseness-1 Using probabilistic assumptions based on the idea of sparse changes, we show that, given enough environments, the causal parents and full DAG are identifiable.

\begin{restatable}[Identifiability of causal parents]{lemma}{lemConsistency}\label{lem:consis_mech}
Let $G^*$ be the true DAG in the MEC $\Gcal_{\MEC}$ and $\rho_i$ the probability that the causal mechanism of $X_i$ is different across any two environments. Under~\cref{assmpt:causal_faithfulness,assumption:shared_mechs,assumption:pseudo_suff,assmpt:independent_mechanisms},
for any $j \in \{1,\dots,d\}$, graph $G \in \Gcal_{\MEC}$ such that $\PA_j^{G^*} \neq \PA_j^G$, and lower and upper bounds on the shift probabilities $\rho_i^{\lb} \leq \rho_i \leq \rho_i^{\ub}$ for all $i$, we have that
\begin{equation*}
    \Pr[\MSS_j(G^*; \Pcal) < \MSS_j(G; \Pcal)] \geq 1 - \left(1 - (1 - \rho_j^{\ub})\min_{i
    } \rho_i^{\lb}\right)^{\lfloor n_{\Ecal} / 2\rfloor}.
\end{equation*}
\end{restatable}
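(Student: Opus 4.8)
The plan is to separate the claim into a deterministic part, already supplied by the earlier results, and a probabilistic existence argument. By \cref{prop:min_shift} we have $\MSS_j(G^*;\Pcal)\leq\MSS_j(G;\Pcal)$ with probability one, and by \cref{cor:true_shift}, read pairwise, every pair of environments $(e,e')$ that contributes to $\MSS_j(G^*;\Pcal)$ also contributes to $\MSS_j(G;\Pcal)$. Thus the set of ``changing pairs'' of $G^*$ is contained in that of $G$, and to upgrade the inequality to a strict one it suffices to exhibit a single \emph{discerning} pair $(e,e')$ at which $\PP^e(X_j\mid\PA_j^G)\neq\PP^{e'}(X_j\mid\PA_j^G)$ while $\PP^e(X_j\mid\PA_j^{G^*})=\PP^{e'}(X_j\mid\PA_j^{G^*})$: such a pair is counted by $\MSS_j(G;\Pcal)$ but not by $\MSS_j(G^*;\Pcal)$, forcing $\MSS_j(G^*;\Pcal)<\MSS_j(G;\Pcal)$. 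I will therefore lower bound the probability that at least one discerning pair exists.

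\textbf{Constructing a witness.} First I would use that $G$ and $G^*$ lie in the same MEC and hence share a skeleton, so $X_j$ has the same adjacency set in both graphs; since $\PA_j^{G}\neq\PA_j^{G^*}$, there must be a neighbor $W$ whose edge with $X_j$ is oriented oppositely in the two graphs. Working in the two-environment augmented graph for a fixed pair $(e,e')$, where the edge $E\to X_i$ is present exactly when $X_i$'s mechanism differs between $e$ and $e'$, I would consider the event that $X_j$'s mechanism is invariant across $(e,e')$ but $W$'s mechanism changes, so that $E\to W$ is present while $E\to X_j$ is absent. If $W\in\PA_j^{G^*}\setminus\PA_j^{G}$, the chain $E\to W\to X_j$ is unblocked given $\Zb=\PA_j^G$ (which excludes $W$), so \cref{cor:edge_flip}(ii) applies; if instead $W\in\CH_j^{G^*}\cap\PA_j^{G}$, the collider $E\to W\leftarrow X_j$ is opened by conditioning on $W\in\Zb$, so \cref{cor:edge_flip}(iii) applies. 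Either way $\PP(X_j\mid\PA_j^G)$ changes across $(e,e')$, while the absence of $E\to X_j$ together with \cref{cor:true_shift} guarantees $\PP(X_j\mid\PA_j^{G^*})$ does not; hence $(e,e')$ is discerning.

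\textbf{Probabilistic bound.} The event above is ``$X_j$ invariant and $W$ changing'', and since $W\neq X_j$ it concerns two distinct mechanisms; by the ICM principle (\cref{assmpt:independent_mechanisms}) these are independent, so any fixed pair is discerning with probability at least $(1-\rho_j)\rho_W\geq(1-\rho_j^{\ub})\min_i\rho_i^{\lb}=:p$. The remaining difficulty is that the $\binom{n_{\Ecal}}{2}$ pairs are \emph{not} independent, since they share environments. I would sidestep this by restricting attention to a collection of $\lfloor n_{\Ecal}/2\rfloor$ pairwise-disjoint pairs; because environments are drawn independently (\cref{assumption:shared_mechs}), the associated discerning events are independent, so the probability that none of them is discerning is at most $(1-p)^{\lfloor n_{\Ecal}/2\rfloor}$. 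Since a discerning pair among these already implies the strict inequality, we obtain $\Pr[\MSS_j(G^*;\Pcal)<\MSS_j(G;\Pcal)]\geq 1-(1-p)^{\lfloor n_{\Ecal}/2\rfloor}$, which is the claimed bound.

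\textbf{Main obstacle.} The two substantive steps are the witness construction---verifying through the flipped-edge case analysis of \cref{cor:edge_flip} that a single independent mechanism change on a neighbor desynchronizes $G$'s conditional without touching $G^*$'s---and, above all, the handling of dependence among pairs. Passing to disjoint pairs is what both restores the independence needed for a product bound and produces the $\lfloor n_{\Ecal}/2\rfloor$ exponent, at the (benign) cost of discarding most of the $\binom{n_{\Ecal}}{2}$ available comparisons.
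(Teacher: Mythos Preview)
Your proposal is correct and follows essentially the same route as the paper's proof: identify a flipped-edge neighbor $W$ (the paper calls it $Z$), use the case analysis of \cref{cor:edge_flip} to show that the event ``$X_j$ invariant, $W$ shifting'' makes the pair discerning, apply ICM independence for the single-pair bound $(1-\rho_j)\rho_W$, and then restrict to $\lfloor n_{\Ecal}/2\rfloor$ disjoint pairs to recover independence and the exponent. The only minor imprecision is citing \cref{cor:true_shift} for the invariance of $\PP(X_j\mid\PA_j^{G^*})$ when $E\to X_j$ is absent; this holds directly by definition of the augmented graph rather than via that corollary, but the claim itself is correct.
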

\vspace{-0.5em}
\begin{proof}[Proof sketch]
\looseness-1 
From \cref{cor:true_shift}, we know: $\MSS_j(G^*; \Pcal) \leq \MSS_j(G; \Pcal)$. We use the shared skeleton property of all DAGs in a MEC and \cref{cor:edge_flip}, for which only case (i) admits a changing mechanism in the true DAG. \looseness-1 Based on the ICM principle, we examine sufficient conditions in a pair of environments and establish a probabilistic upper bound on all pairs.
\end{proof}

Note that this bound is independent of the other DAG~$G$, so long as $G$ is in the MEC and has a different set of causal parents. As a special case of~\cref{lem:consis_mech}, invariance of the mechanism of $X_j$ implies $\rho_j^{\ub}=0$ and hence provides a bound relevant to invariant causal prediction~\cite{peters2016causal, heinze2018invariant}. Building off of \cref{lem:consis_mech}, we can provide a probabilistic bound on identifiability of the whole graph.

\begin{restatable}[Identifiability of the graph]{theorem}{thmConsistency}\label{thm:consi_dags}
Let $G^*$ be the true DAG in the MEC $\Gcal_{\MEC}$ and $\rho_j$ the probability that the causal mechanism of $X_j$ is different across any two environments. Under assumptions
\ref{assmpt:causal_faithfulness},
\ref{assumption:shared_mechs}, ~\ref{assmpt:independent_mechanisms}, and
\ref{assumption:pseudo_suff},
and bounds $\rho_i^{\lb} \leq \rho_i \leq \rho_i^{\ub}$ for all $i$, we have that
\begin{equation*}
    \Pr[\Gcal_{\MEC}^{\min} = \{G^*\}] \geq %
    1 - |\Gcal_{\MEC}|\left(1 - (1 - \min_i \rho_i^{\ub}) \min_{i} \rho_i^{\lb}\right)^{\lfloor n_{\Ecal} / 2\rfloor}.
\end{equation*}
\end{restatable}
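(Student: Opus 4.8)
The plan is to reduce the global identifiability statement to the per-vertex guarantee of \cref{lem:consis_mech} through an event decomposition followed by a union bound over the finitely many competitors in the MEC.

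First I would rewrite the target event. By definition, $\Gcal_{\MEC}^{\min} = \{G^*\}$ holds precisely when $G^*$ is the \emph{strict} minimizer of $\MSS(\cdot;\Pcal)$ over $\Gcal_{\MEC}$, i.e.\ when $\MSS(G^*;\Pcal) < \MSS(G;\Pcal)$ for every $G \in \Gcal_{\MEC}$ with $G \neq G^*$. Since \cref{prop:min_shift} already guarantees the non-strict inequality $\MSS(G^*;\Pcal) \le \MSS(G;\Pcal)$ deterministically, the entire probabilistic content is the strictness, and the complementary (failure) event is $\bigcup_{G\neq G^*}\{\MSS(G^*;\Pcal) = \MSS(G;\Pcal)\}$.

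Next, for a fixed competitor $G \neq G^*$ I would exploit the vertexwise structure of the score. Because $G$ and $G^*$ are distinct DAGs in the same equivalence class, they must disagree on the parent set of at least one vertex, so there is some $X_j$ with $\PA_j^{G} \neq \PA_j^{G^*}$. Since $\MSS(G;\Pcal)=\sum_k \MSS_k(G;\Pcal)$, with $\MSS_k(G^*;\Pcal)=\MSS_k(G;\Pcal)$ whenever $\PA_k^{G}=\PA_k^{G^*}$ (identical conditionals yield identical counts) and $\MSS_k(G^*;\Pcal)\le \MSS_k(G;\Pcal)$ term by term by \cref{prop:min_shift}, a single strict inequality at $j$ forces strictness of the totals:
\[
\{\MSS_j(G^*;\Pcal) < \MSS_j(G;\Pcal)\} \subseteq \{\MSS(G^*;\Pcal) < \MSS(G;\Pcal)\}.
\]
Applying \cref{lem:consis_mech} at this differing vertex then bounds the per-competitor failure probability by $\bigl(1 - (1-\rho_j^{\ub})\min_i \rho_i^{\lb}\bigr)^{\lfloor n_{\Ecal}/2\rfloor}$. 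Making the base independent of $G$ by replacing $\rho_j^{\ub}$ with the extremal shift probability among the relevant vertices collapses these into a single constant, and a union bound over the at most $|\Gcal_{\MEC}|-1 \le |\Gcal_{\MEC}|$ graphs $G\neq G^*$ yields the stated bound after complementation.

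I expect the main obstacle to be the uniformization step: \cref{lem:consis_mech} produces a bound whose base involves the shift probability of the particular vertex at which $G$ and $G^*$ disagree, and collapsing these graph-dependent bases into the one extremal constant appearing in the statement requires care about which vertices can differ between members of a MEC (a covered-edge reversal changes the parent sets of its two endpoints). The union bound itself is routine but worth flagging as loose, since $|\Gcal_{\MEC}|$ can grow quickly with $d$; it is the exponential decay in $\lfloor n_{\Ecal}/2\rfloor$ that ultimately controls the failure probability once enough sparsely changing environments are observed.
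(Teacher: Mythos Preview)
Your proposal is correct and follows essentially the same route as the paper: rewrite the failure event as a union over competitors $G\neq G^*$, reduce equality of the total score to equality at a differing vertex via the termwise inequality from \cref{prop:min_shift}, invoke \cref{lem:consis_mech} there, uniformize the base, and union-bound over $\Gcal_{\MEC}$. The only cosmetic difference is that the paper writes the per-competitor step as $\Pr[\bigcap_j\{\MSS_j(G^*)=\MSS_j(G)\}]\le \min_j \Pr[\cdot]$ rather than fixing one differing $j$, which leads to the same bound after uniformization; your caution about that uniformization step is well placed.
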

\begin{proof}[Proof sketch]
From \cref{prop:min_shift}, $G^*$ is always in $G^{\min}_{\MEC}$. For each DAG, we use \cref{lem:consis_mech} to bound the probability that all mechanisms exhibit the same number of changes. Then we apply the union bound to establish an upper bound across all DAGs.
\end{proof}
\begin{corollary}\label{cor:identifiability}
If $\rho_i$ is bounded away from 0 and 1 for all $i$,
(a probabilistic form of~\cref{assmpt:sparse_change}),
\begin{equation*}
    \Pr[\Gcal_{\MEC}^{\min} 
    = \{G^*\}] \to 1 \quad \text{as} \quad n_{\Ecal} \to \infty
\end{equation*}
That is, with enough environments we can recover the true DAG from the Markov equivalence class.
\end{corollary}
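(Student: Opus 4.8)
The plan is to treat this as a direct limit argument built on the nonasymptotic bound already established in \cref{thm:consi_dags}, since \cref{cor:identifiability} is stated as an immediate corollary. First I would make precise what ``bounded away from $0$ and $1$'' buys us at the level of the bounds appearing in the theorem: the hypothesis gives constants $\delta, \eta \in (0,1)$ with $\delta \le \rho_i \le 1 - \eta$ for every $i$, so I may legitimately instantiate the free bounds of \cref{thm:consi_dags} as $\rho_i^{\lb} := \delta$ and $\rho_i^{\ub} := 1 - \eta$, which satisfy the required sandwich $\rho_i^{\lb} \le \rho_i \le \rho_i^{\ub}$.

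Next I would analyze the base of the exponential in that bound. Writing $q := 1 - (1 - \min_i \rho_i^{\ub}) \min_i \rho_i^{\lb}$, the two facts above give $\min_i \rho_i^{\lb} \ge \delta > 0$ and $\min_i \rho_i^{\ub} \le 1 - \eta < 1$, hence $(1 - \min_i \rho_i^{\ub}) \min_i \rho_i^{\lb} \ge \eta \delta > 0$ and therefore $q \le 1 - \eta\delta < 1$. The crucial observations are that $q$ is a \emph{constant} independent of $n_{\Ecal}$, and that $|\Gcal_{\MEC}|$ is likewise a fixed finite quantity determined by the (unaugmented) causal structure and not by the number of environments.

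With these two observations in hand the conclusion is immediate. Since $0 \le q < 1$ and $\lfloor n_{\Ecal}/2 \rfloor \to \infty$, the term $q^{\lfloor n_{\Ecal}/2 \rfloor}$ decays to $0$ exponentially, so the union-bound prefactor cannot keep pace and $|\Gcal_{\MEC}| \, q^{\lfloor n_{\Ecal}/2 \rfloor} \to 0$. Substituting into \cref{thm:consi_dags} gives $\Pr[\Gcal_{\MEC}^{\min} = \{G^*\}] \ge 1 - |\Gcal_{\MEC}| \, q^{\lfloor n_{\Ecal}/2 \rfloor} \to 1$; combined with the trivial upper bound $\Pr[\cdot] \le 1$, a squeeze yields the claimed limit.

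I do not expect a genuine obstacle, as this is a routine asymptotic consequence of the finite-sample theorem; the only steps that must not be skipped are conceptual rather than computational. One is verifying that \emph{both} halves of the hypothesis are genuinely needed: boundedness away from $0$ is what forces $\min_i \rho_i^{\lb} > 0$, while boundedness away from $1$ is what forces $\min_i \rho_i^{\ub} < 1$, and dropping either would permit $q = 1$ and destroy the exponential decay. The other is stating explicitly that $|\Gcal_{\MEC}|$ is constant in $n_{\Ecal}$, so that the prefactor is dominated by the vanishing exponential.
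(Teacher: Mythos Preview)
Your proposal is correct and follows the same approach as the paper: instantiate the bounds so that $\rho_i^{\lb}>0$ and $\rho_i^{\ub}<1$, plug into \cref{thm:consi_dags}, and take the limit. The paper's proof is in fact a terse two-sentence version of exactly this argument, while you have spelled out the details (e.g., that $|\Gcal_{\MEC}|$ is constant in $n_{\Ecal}$ and that both halves of the hypothesis are needed) more carefully.
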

\vspace{-0.5em}
\begin{proof}
The assumption of bounded probability implies that $\rho_i^{\ub} < 1$ and $\rho_i^{\lb} > 0$ for all $i$.
Hence, by the rate established in \cref{thm:consi_dags}, identifiability is achieved in the limit.
\end{proof}%
\textbf{The \MSS\ estimator.}
An empirical \MSS\ estimator tests if two conditional distributions change across two environments. This can be done using conditional independence tests or equality of distribution tests~\citep{panda2021nonpar}. Under parametric assumptions, models may be fit for each mechanism, and these parameters can then be tested across environments~\citep{ghassami2017learning, ghassami2018multi, peters2016causal}. \citet{heinze2018invariant} provided a comprehensive study of such tests and their power for ICP. More recent but less studied work by \citet{park2021conditional} has provided a kernel-based approach with strong guarantees. In practice care must be taken when using equality of conditional distribution tests, especially if any of their assumptions are violated~\citep{shah2018hardness}.

\textbf{Computational complexity.}
\looseness-1 The score function $\MSS(G; \Pcal)$ is \textit{decomposable}~\citep{hauser2012characterization} in  that it is the sum of local scores $\MSS_j(G; \Pcal)$. In the most na\"ive approach, each mechanism in each of $|\Gcal_\MEC|$ DAGs must be tested across all pairwise environments, on the overall order of $O(|\Gcal_\MEC|dn_{\Ecal}^2)$, without accounting for the complexity of the statistical test.
This can be slow, but if the test runtime scales with sample size $n$ faster than $O(n^2)$, then pairwise tests may actually be faster than pooling the data. In practice, the decomposable property permits a speedup: since many mechanisms will be shared across DAGs, we can test each unique mechanism and then select the results relevant for each DAG. 
Experimentally, we find the main bottlenecks to be large sample sizes and numbers of environments.

\begin{figure}[t]
    \centering
    \includegraphics[width=\textwidth]{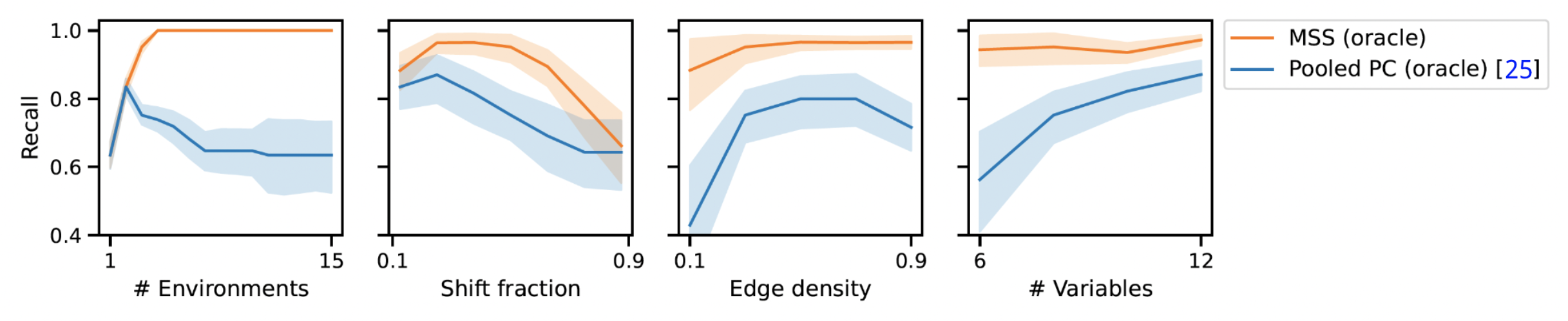}
    \caption{\small \textbf{Oracle rates match the theory.} From left to right (a-d): (a)~With sufficiently many environments, our \MSS\ approach learns the true DAG while pooled PC recovers only the original MEC. (b)~Pairwise comparisons are less beneficial when shifts are extremely sparse or dense. Differences across the two methods are particularly pronounced both in (c)~sparse and dense DAGs, as well as in (d)~smaller DAGs. Shaded regions denote 95\% confidence intervals calculated from bootstrapped data.}
    \label{fig:sim_oracle_rates}
\end{figure}

\section{Structure learning experiments}\label{sec:experiments}
Having established the theoretic value of the \MSS\ estimand, we seek to understand \textit{empirically}: (i) how the oracle \MSS\ and pooled PC approaches compare across experimental settings, (ii) which \MSS\ estimators perform best, and (iii) how the \MSS\ compares to related approaches.\footnote{All code and experiments are available at \href{https://github.com/rflperry/sparse\_shift}{https://github.com/rflperry/sparse\_shift}} 
For ease of comparison, we adapt the simulation setup of~\citet{huang2020causal}. Random DAGs are sampled using an Erd\H{o}s-R\`enyi model~\citep{erdHos1960evolution} in which each edge has some fixed probability of existing (the \textit{edge density}). In each environment, a random set of variables experience a mechanism change according to a fixed number or fraction of shifts. Each variable $j$ in environment $e$ has a randomly sampled mechanism
\begin{equation}
    \label{eq:sim_dist}
    \textstyle
    X_j^e := \sum_{i \in \PA_j} b_{ji}^e f_{ji} (X_i^e) + \sigma_j^e \epsilon_j^e
\end{equation}
where $b_{ji} \sim \Ucal(0.5, 2.5)$, $\sigma_j^e \sim \Ucal(1, 3)$, and $\epsilon_j^e \sim \Ncal(0, 1)$ or $\Ucal(1, 3)$ with equal probability. The functions $f_{ji}$ are selected uniformly at random from $\{x^2, x^3, \tanh, \mathrm{sinc}\}$. Mechanisms in an unobserved baseline environment are sampled and $\sigma_{ji} = 1$ is fixed. \looseness-1 Each observed environment inherits the baseline distributions and mechanisms shifts are resampled per~\eqref{eq:sim_dist}.

\looseness-1 We evaluate the quality of an estimated CPDAG against the true DAG via \textit{precision} and \textit{recall}~\citep{ghassami2017learning, ghassami2018multi, huang2020causal}. Precision is the fraction of \textit{directed} edges in the CPDAG which are correctly oriented. Recall is the fraction of \textit{all} edges in the CPDAG which are oriented. Thus, the true DAG has perfect precision and recall. Since all methods start from the MEC, there are no incorrect edges, only incorrect orientations. \rebuttal{For oracle methods, the precision is perfect and so we only report the recall.}

\begin{figure}[t]
    \centering
    \includegraphics[width=\textwidth]{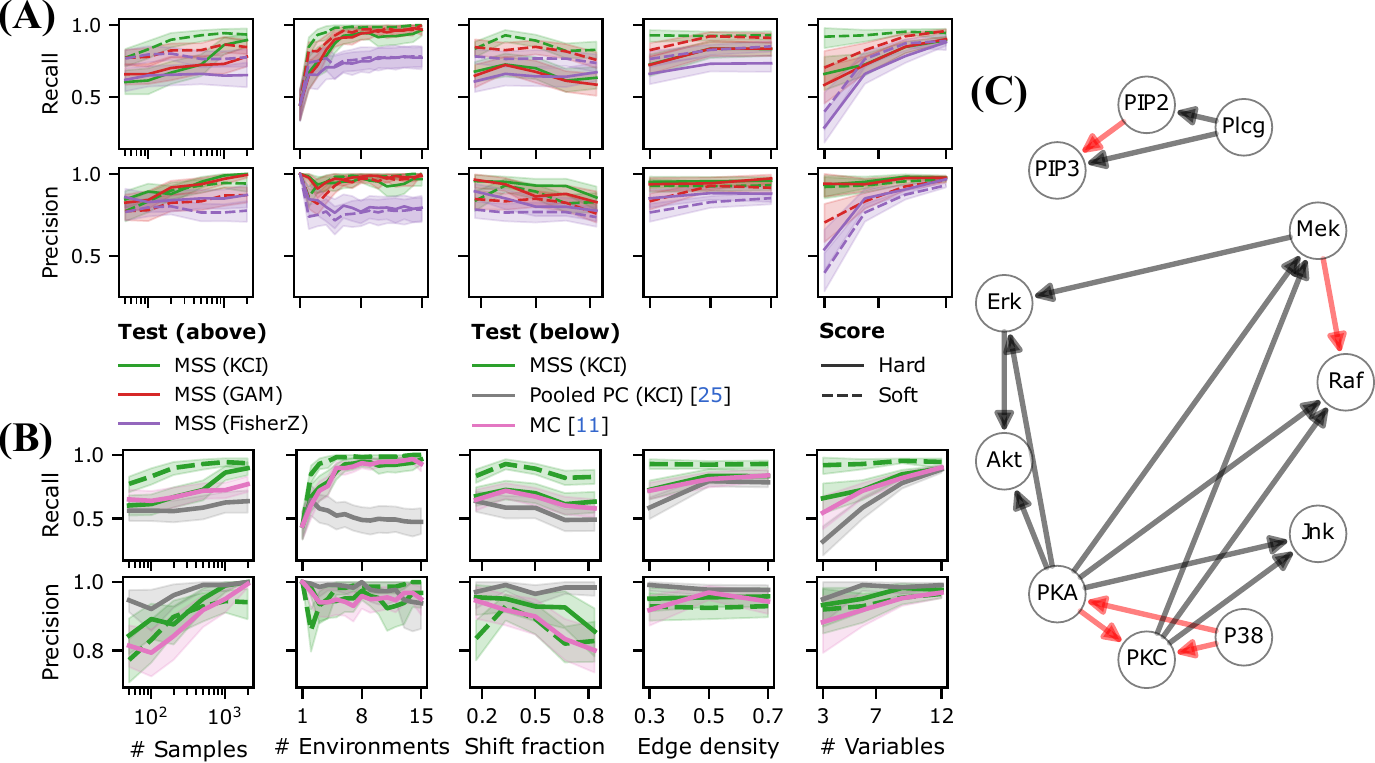}
    \caption{\small \textbf{(A)} Nonparametric hypothesis tests perform well in nonlinear simulations, and soft scores succeed. Notably, recall converges with increasing environments. KCI appears to best balance high recall and precision. \textbf{(B)} In simulation, pairwise approaches improve with more environments, unlike Pooled PC. Although the parametric MC works surprisingly well across settings, the nonparametric \MSS\ with KCI has superior precision and recall. \textbf{(C)} \MSS\ (KCI) edge orientations mostly match the real data Sachs network~\citep{sachs2005causal}. Non-matching edges (red) are posited to be involved in cycles and of ambiguous orientation in the literature.}
    \label{fig:unified_empirical}
\end{figure}

\textbf{Oracle \MSS\ rates match the theory.} Having theory on learning rate bounds under sparsity, we now seek to understand how the empirical performance of \MSS\ and pooled PC depends on a variety of graph and sparsity settings, both which our theory does and does not address. We consider random DAGs over $6$ variables with edge density $0.3$. Five environments are sampled, in each of which half of the mechanisms shift. \looseness-1 In~\cref{fig:sim_oracle_rates}, we hold all of these settings fixed and vary one at a time across $50$ repetitions, comparing the recall of the two methods.\footnote{Oracle methods used code from the \textit{causaldag} package [3-Clause BSD license].} Precision is always perfect under the oracle test.

In the first two plots, the empirical results match what the theory predicts. First, per~\cref{cor:identifiability}, with more environments \MSS\ learns the entire graph while pooled PC learns nothing but the original MEC. Second, per~\cref{thm:consi_dags}, the learning rate decreases when shifts are either uncommon or frequent. Furthermore, we see that differences between the two approaches are accentuated in sparse and dense DAGs, as well as in smaller DAGs. Note that sparsity is a fixed fraction of the variables, and hence larger DAGs experience more shifts in absolute terms. See~\cref{appendix:supporting_experiments} for additional oracle experiments.

\textbf{MSS performance depends on the chosen estimator.} 
\looseness-1 Next, we study how the choice of estimator and type of score affects performance. We use two popular conditional independence tests, the Fisher-Z partial correlation test~\citep{kalisch2007estimating} and the Kernel Conditional Independence (KCI) test~\citep{zhang2011kernel}, as well as the invariant residual test using a generalized additive model (GAM), a top-performing ICP method~\citep{heinze2018invariant}.\footnote{KCI and Fisher-Z are implemented by the \textit{causal-learn} package [GNU General Public License].} Each detects a change if the test \textit{p}-value is less than $\alpha := 0.05 / d$, a Bonferroni correction to bound the false positive rate for each scored DAG. Although the theory pertains to counting shifts, in settings where a ``hard" hypothesis test has low power it may be of more value to use a ``softer" score (e.g., based on \textit{p}-values) to quantify changes along a continuous spectrum. We propose the following ``soft'' score (see~\cref{appendix:details} for further details):
\begin{equation*}
    \textstyle
    \widehat{\MSS}_j(G; \Dcal) = \sum_{e=1, e'>e}^{n_{\Ecal}} \left[1 - \text{\textit{p}-value}\left(\PP^e(X_j|\PA_j^G) \neq \PP^{e'}(X_j|\PA_j^G)\right)\right].
\end{equation*}
DAGs are generated on six variables with edge density $0.3$. Three environments are sampled, with $500$ samples and two mechanism shifts per environment. We vary each variable while holding the others fixed and compare results across $50$ repetitions.

\looseness-1 As seen in~\cref{fig:unified_empirical}(A), the Fisher-Z test performs noticeably worse, presumably due to unmet parametric assumptions, while the nonparametric approaches do well, particularly at higher sample sizes. As with the oracle, the true DAG is recovered with enough environments. The ``soft'' versions achieve higher recall, since there is a unique minima, at the cost of worse precision (see~\cref{appendix:soft_score} for greater discussion). The ``soft" KCI seems to be best, suggesting that KCI models the data the best. In practice, it is crucial for a method to model the data well for the \textit{p}-value to be valid~\citep{shah2018hardness}.

\textbf{MSS compares favorably against other methods.} \looseness-1 
Next, we compare \MSS\ to relevant existing methods: the \textit{Minimal changes (MC)} approach~\citep{ghassami2018multi} tests pairs of environments for changes in the parameters of a linear model, and \citet{huang2020causal} provide a nonparametric version---a two-stage approach which first uses PC with the KCI test on the pooled data. We investigate if pooling data loses information under empirical tests, and how the nonparametric pairwise \MSS\ test compares. 

\looseness-1 In~\cref{fig:unified_empirical}(B), we compare these approaches in the previously studied experimental setting. Pooled PC has quite high precision (\rebuttal{as it pools all of the data}), but suffers from lower recall, especially with more environments when recall is no better than the base MEC. In contrast, ``hard'' KCI has higher recall \rebuttal{at the cost of some precision when individual environments have few samples}. The parametric MC works surprisingly well, yet slightly worse than KCI. Overall, \MSS\ seems to combine the best of both approaches: the value of pairwise comparisons from MC with the flexibility of incorporating various nonparametric estimators. Additional experiments in~\cref{appendix:supporting_experiments} confirm this in the bivariate case.

\textbf{Protein network discovery.}
\looseness-1
As an illustration of our method in practice, we conduct a case-study application of \mbox{\MSS\,} for causal discovery on a well-studied cytometry dataset~\mbox{\citep{sachs2005causal}} consisting of $9$ experimental environments of $11$ cellular proteins. Starting from the Sachs MEC, we apply the \mbox{\MSS\,} using the KCI test, which appears to perform the best among plug-in estimators for \mbox{\MSS\,} in our simulations.

\looseness-1 The DAG which minimizes the \MSS\, is the unique minimizer and is visualized in~\cref{fig:unified_empirical}(C). Learned edge orientations mostly match the Sachs network~\citep{sachs2005causal}. Non-matching edges, shown in red, are posited to be involved in cycles and of ambiguous orientation in the literature: PIP2 $\to$ PIP3 is known to be a cyclical relation~\citep{sachs2005causal, ramsey2018fask}, Mek $\to$ Raf is indeed found by many other methods~\citep{eaton2007exact, ramsey2018fask, mooij2020joint}, and although there is not a detailed discussion of the PKA, PKC, P38 triangle, there is ambiguity in the edge directions among approaches~\citep{mooij2020joint, ramsey2018fask, eaton2007exact}, as we discuss in more detail in~\cref{appendix:cytometry}.

\section{Discussion}\label{sec:discussion}
\textbf{Sparse shifts as a relaxation of the i.i.d.\ assumption.}
Distribution shifts are a common violation of the i.i.d.\ assumption and a problematic source of error in practice.
It has been argued that the issue of robustness to natural shifts is connected to causality~\citep{scholkopf2012on,Bareinboim_NIPS2014,zhang_multi-source_2015,peters2016causal, veitch2021counterfactual, arjovsky2020invariant, Kugelen2021selfsupervised}.
\looseness-1 We have shown that viewing a shift in distribution through the causal factorization permits a useful relaxation of the i.i.d.\ assumption, facilitating causal discovery: if there are no shifts (i.i.d.), we can only identify equivalence classes. If shifts are unrestricted, we cannot meaningfully transfer across distributions~\citep{scholkopf2012on}, and ideas such as Simon's invariance criterion~\citep{Hoover06} and ICP~\citep{peters2016causal} do not help. However, if shifts occur {\em sparsely}---as we formalize---we can provably use this as a learning signal to infer the causal structure.

\textbf{The Mechanism Shift Score (\MSS) and prior methods.}
\looseness-1 The \MSS\ framework extends previous causal discovery work limited to linear mechanisms~\citep{ghassami2018multi}, just as nonlinear ICP~\citep{heinze2018invariant} extended the initially limited ICP approach~\citep{peters2016causal} beyond linear models. Additionally, we have provided a graph-theoretic analysis proving why pairwise comparisons are actually useful; the learning rates we have established apply to these previous works and provide insight on the role of sparsity. Although \citet{huang2020causal} provide a thorough nonparametric approach and analysis, we have demonstrated that the first stage of their two-stage-approach is not suited for learning beyond the MEC since it pools all the data. The \MSS\ is both nonparametric and explicitly leverages sparsity through pairwise comparisons.

\textbf{Beyond causal discovery.}
\looseness-1 Once the causal graph is known, conditional distributions and hence an entire causal graphical model can be learned. This is harder than learning a statistical model, but has various advantages, especially when distributions shift, as they do in reality.
E.g., we may be able to use such a model for causal reasoning, i.e., estimating a certain causal effect.
We also expect that \MSS\ can serve as a useful inductive bias for causal representation learning, similar to how invariant prediction~\citep{scholkopf2012on,peters2016causal} inspired invariant risk minimization~\citep{arjovsky2020invariant}; recent work has started to explore this~\citep{lachapelle2022disentanglement,lippe2022citris}.

\textbf{Empirical performance \rebuttal{and limitations.}}
\looseness-1 We infer causal structure through a flexible score-based method; as empirically demonstrated, strong results can be obtained by multiple estimators when the assumption of sparsity is met. \rebuttal{This requires hypothesis tests which can accurately obtain the full MEC as a starting point and subsequently test for distribution changes. Among additional assumptions including the stringent pseudo-causal sufficiency, sparsity is necessary and yet not easily verifiable; SMS is indeed a  \textit{hypothesis} regarding causal systems, not a fact.} We conjecture that under an oracle, the ``hard'' \MSS\ is equivalent to the PC algorithm pooled pairwise across environments. It is worth noting that the ``hard'' approach may still be useful under dense changes if only a sparse number of them are large enough to be \textit{empirically discernible}. Thus the empirical method can actually outperform the oracle baseline and be useful even if the assumption of sparsity is unmet.
We also assume that a partition into environments is known, but environment inference techniques may help relax this~\citep{creager2021environment}.

\textbf{Outlook and conclusion.} \looseness-1 Imagining causal models on an axis of complexity, from the microscopic physical laws of nature to a simplified set of variables and relationships, we transition from a system with no mechanism shifts (effectively a dynamical system) to a system in which all mechanisms shift (due to many unmeasured causes). In the middle, we posit only a sparse number of shifts to be empirically discernible. While {\em all (causal) models are wrong}, the one which is most invariant to shifts may be the best candidate for supporting robust and transferable inference.

\begin{ack}
\looseness-1
We thank Jonas K\"ubler, Junhyung Park, Krikamol Muandet, Luigi Gresele, the T\"ubingen causality team, and the anonymous reviewers for helpful comments.
This work was supported by the German Federal Ministry of Education and Research (BMBF): Tübingen AI Center, FKZ: 01IS18039A, 01IS18039B; and by the Machine Learning Cluster of Excellence, EXC number 2064/1 – Project number 390727645. Ronan Perry was supported by a Fulbright Germany research fellowship.
\end{ack}

\bibliographystyle{plainnat}
\bibliography{references}

\newpage
\renewcommand{\baselinestretch}{1}
\appendix
\addcontentsline{toc}{section}{Appendices}%
\part{Appendices} %
\changelinkcolor{black}{}
\parttoc%
\changelinkcolor{red}{}

\section{Graph terminology}\label{appendix:graph_theory}

A directed graph $G = (V, T)$ is an object consisting of a set of vertices $V$ and a set of ordered pairs of vertices $T\subset V\times V$ corresponding to directed edges in $G$. A \textit{path} is a sequence of vertices $(V_{i_1},\dots,V_{i_n})$ with $n\geq 2$ such that $V_{i_k} \to V_{i_{k+1}}$ or $V_{i_{k}} \leftarrow V_{i_{k+1}}$ and in a \textit{directed path} $V_{i_{k}} \to V_{i_{k+1}}$ for all $k$. In graph $G$: the \textit{children} $\CH_j^G$ of $V_j$ are all $V_{m}$ such that $V_{j} \to V_m$, the \textit{parents} $\PA_j^G$ of $V_j$ are all $V_{m}$ such that $V_{m} \to V_j$, the \textit{ancestors} $\AN_j^G$ of $V_j$ are all $V_m$ such that there exists a directed path $(V_m,\dots,V_j)$, and the \textit{descendants} $\DE_j^G$ of $V_j$ are $V_j$ and all $V_m$ such that there exists a directed path $(V_j,\dots,V_m)$. The graph superscript will be omitted unless needed.  A \textit{cycle} is a path such that $V_{i_1} = V_{i_n}$ and a \textit{directed acyclic graph (DAG)} is a directed graph with no directed cycles.

On a path $(V_{i_1},\dots,V_{i_k},\dots,V_{i_n})$, we say variable $V_{i_k}$ is a \textit{collider} if $V_{i_{k-1}} \to V_{i_k}$ and $V_{i_k} \leftarrow V_{i_{k+1}}$. A subset $\Zb \in \Vb \setminus \{V_{i_1}, V_{i_n}\}$ \textit{blocks} the path if either (i) $\Zb$ contains
at least one non-collider vertex on the path or (ii) the path contains a collider with no descendants in $\Zb$ (this includes the collider itself by the descendant definition). With this terminology, we say that on the disjoint variable sets $\Ab$, $\Bb$, and $\Zb$, $\Ab$ is \textit{d-separated} from $\Bb$ by $\Zb$ iff every path between $\Ab$ and $\Bb$ is blocked by $\Zb$ ~\cite[Def.~6.1]{peters2017elements}. This is denoted as $\Ab \independent_{\!\!G}~\Bb~|~\Zb$. If $\Ab$ and $\Bb$ are not d-separated, and hence there exists an unblocked path, we say that they are \textit{d-connected}.

\clearpage
\section{Full proofs}\label{appendix:full_proofs}

\rebuttal{Our first result is not focused on in the main text but is nonetheless interesting in relation to related works and creating a coherent multi-environment causal discovery framework. Specifically, it shows that the \mbox{\MSS\,} solution set is an equivalence class of DAGs. Our main results demonstrate when, how, and under what conditions this equivalence class shrinks to only the true DAG.}
\begin{proposition}
\rebuttal{$\Gcal_{\MEC}^{\min}$ is a $\Psi$-MEC with respect to the true but unknown interventional targets $\{\Ical^e\}_{e=1}^{n_e}$.}
\end{proposition}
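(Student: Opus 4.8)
The plan is to establish the two inclusions $\Gcal_{\MEC}^{\min}\subseteq\Psi\text{-MEC}$ and $\Psi\text{-MEC}\subseteq\Gcal_{\MEC}^{\min}$ by first giving a purely combinatorial description of the minimizer set and then matching it to the augmented-graph characterization of $\Psi$-Markov equivalence due to \citet{jaber_causal_2020}. Throughout, $\Psi$ is taken to be the true (unknown) target family $\{\Ical^e\}_{e=1}^{n_e}$, and I identify a $\Psi$-MEC with the set of DAGs appearing in the $\Psi$-Markov equivalence class of $(G^*,\{\Ical^e\})$.

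First I would sharpen \cref{prop:min_shift}. Since $\MSS_j(G^*;\Pcal)\leq\MSS_j(G;\Pcal)$ holds coordinatewise, a DAG $G$ minimizes the total score $\MSS=\sum_j\MSS_j$ if and only if it attains equality in every coordinate, $\MSS_j(G;\Pcal)=\MSS_j(G^*;\Pcal)$. Moreover, \cref{cor:true_shift} applied pairwise (via \cref{lem:edge_flip}) shows that the set of environment pairs on which $\PP(X_j\mid\PA_j^{G^*})$ changes is contained in the set on which $\PP(X_j\mid\PA_j^{G})$ changes; equality of the counts therefore forces equality of these sets. Hence $G\in\Gcal_{\MEC}^{\min}$ if and only if $G$ and $G^*$ induce the identical per-variable, per-pair pattern of mechanism changes: for every $j$ and every pair $e'>e$, one has $\PP^e(X_j\mid\PA_j^{G})\neq\PP^{e'}(X_j\mid\PA_j^{G})$ exactly when $\PP^e(X_j\mid\PA_j^{G^*})\neq\PP^{e'}(X_j\mid\PA_j^{G^*})$.

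Next I would translate this pattern into the structure of the augmented DAG. By \cref{lem:edge_flip}, each statement ``$\PP(X_j\mid\PA_j^{G})$ changes on pair $(e,e')$'' is equivalent to a d-connection fact $X_j\not\independent_{G_{\Xb\cup E}} E\mid\PA_j^G$, and the collection of such facts is exactly what pins down the interventional targets up to the unknown-observational-environment symmetry inherent to the $\Psi$-framework. Using the shared skeleton and v-structures of the MEC together with the case analysis of \cref{cor:edge_flip}, matching the pattern is equivalent to the augmented graphs $G_{\Xb\cup E}$ and $G^*_{\Xb\cup E}$ sharing (a) the same $E\to X_j$ edges---a variable has a changing conditional in some pair iff it is an intervention target---and (b) the same v-structures, including those incident to $E$. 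This is precisely the criterion for $\Psi$-Markov equivalence of \citet{jaber_causal_2020} with respect to $\{\Ical^e\}_{e=1}^{n_e}$. The converse is symmetric: any DAG $\Psi$-Markov equivalent to $(G^*,\{\Ical^e\})$ reproduces the augmented skeleton and v-structures, hence the d-connection facts of \cref{cor:edge_flip}, hence the change pattern, and so lies in $\Gcal_{\MEC}^{\min}$.

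The main obstacle is this last step: rigorously matching the ``identical change pattern'' condition to the formal $\Psi$-Markov equivalence criterion. The two delicate points are (i) reconciling the single environment node $E$ of \cref{def:aug_cgm} with the per-context augmentation used by \citet{jaber_causal_2020}, i.e.\ showing that the pairwise information counted by the \MSS\ carries the same content as the $\Psi$-augmented graph; and (ii) handling v-structures incident to $E$, where a changing conditional may arise not from a direct intervention on $X_j$ but through an unblocked path (cases (ii)--(iii) of \cref{cor:edge_flip}). For the latter one must verify that the minimizer constraint forbids any such spurious extra changes, so that the induced target structure of every $G\in\Gcal_{\MEC}^{\min}$ genuinely agrees with $\{\Ical^e\}$ rather than merely having the same number of shifts.
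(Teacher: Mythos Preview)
Your overall strategy matches the paper's: both prove the two inclusions by invoking the skeleton-and-v-structure characterization of $\Psi$-Markov equivalence from \citet{jaber_causal_2020}. Your first reduction---from $G\in\Gcal_{\MEC}^{\min}$ to exact agreement of the per-variable, per-pair change pattern, via the coordinate-wise inequality of \cref{prop:min_shift} and the set containment behind \cref{cor:true_shift}---is correct and is in fact a cleaner intermediate characterization than the paper makes explicit.

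Where you stop is precisely where the paper does the work. On your obstacle~(i): the paper does not attempt to reconcile the single-$E$ augmented CGM of \cref{def:aug_cgm} with the Jaber criterion; it simply introduces the pairwise-augmented graph $Aug_\Ical(G)$ (one node $E^{e,e'}$ per unordered pair of environments, with $E^{e,e'}\to X_i$ iff $i\in\Ical^e\triangle\Ical^{e'}$) and works with that object directly. Since all $G\in\Gcal_{\MEC}$ share a skeleton and the augmented edges depend only on the \emph{true} targets $\Ical$, the skeleton part of the Jaber criterion is immediate. On your obstacle~(ii): the paper dispatches the v-structure direction by an explicit case analysis keyed to cases~(ii) and~(iii) of \cref{cor:edge_flip}. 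For any $G\notin\Gcal_{\MEC}^{\min}$ there is a truly invariant $X_j$ with a spurious change through either an unconditioned parent or a conditioned child; in each subcase the paper exhibits a concrete v-structure incident to an $E^{e,e'}$ node that must differ between $Aug_\Ical(G)$ and $Aug_\Ical(G^*)$, checking that the alternative edge orientations force cycles and iterating along parent chains until the base case is reached. So your plan is sound, but completing it requires exactly the case analysis you deferred; there is no shortcut that avoids it.
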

\begin{proof}
First we must introduce the $\Psi$-MEC concept as introduced by~\citet{jaber_causal_2020}. Recall that $\Ical^e$ denotes the set of mechanisms which change in environment $e \in |n_e| := \{1,\dots,n_e\}$ as compared to the unknown baseline environment. The following definition for an augmented graph without latent variables comes from \citet{jaber_causal_2020}.
\begin{definition}[Augmented graph~\citep{jaber_causal_2020}]
Consider a DAG $G = (\Xb, \Tb)$. Let $$|n_e|^2 := \{(e, e') | e \in |n_e|, e' \in |n_e|, e > e'\}$$ denote all unordered pairs of environments. The augmented DAG
\begin{equation*}
    Aug_\Ical(G) := (\Xb \cup \{E^{e, e'}\ | (e, e') \in |n_e|^2\}, \Tb \cup \big(\cup_{(e, e') \in |n_e|^2}\{E^{e, e'} \to X_i | i \in \Ical^e \triangle \Ical^{e'}\}\big))
\end{equation*} incorporates environmental variable nodes $E^{e, e'}$ for all pairwise environments and edges to a variables $X_j$ if its mechanism differs between environments $e$ and $e'$. $\triangle$ denotes the symmetric set difference.
\end{definition}

From this definition, \citet{jaber_causal_2020} provide the following results

\begin{corollary}[\citet{jaber_causal_2020}]\label{corollary:jaber}
Given two DAGs $G_1$ and $G_2$ on the same vertices without latent variables, and sets of interventional targets $\Ical_1^e$ and $\Ical_2^e$ for $e \in |n_e|$, the pairs $(G_1, \{\Ical_1^e\}_{e \in |n_e|}$ and $(G_2, \{\Ical_2^e\}_{e \in |n_e|}$ are $\Psi$-Markov equivalent iff $Aug_{\Ical_1}(G_1)$ and $Aug_{\Ical_2}(G_2)$ have (1) the same skeleton and (2) the same v-structures. That is, iff the augmented graphs are in the same Markov equivalence class. 
\end{corollary}

Now recall that by definition
\begin{equation*}
    \Gcal_{\MEC}^{\min} := \argmin_{G \in \Gcal_{\MEC}} \MSS(G; \Pcal)
\end{equation*}

($\Rightarrow$) To show that all elements of $\Gcal_{\MEC}^{\min}$ are $\Psi$-Markov equivalent with respect to the true targets $\{\Ical^e\}_{e \in |n_e|}$, we need to verify the two corollary conditions.

\begin{itemize}
    \item \textbf{Skeleton}: Since $\Gcal_{\MEC}^{\min} \subseteq \Gcal_{\MEC}$, by definition of the \MEC\ all unaugmented DAGs in the set share the same skeleton. Since the augmented edges are defined solely with respect to the intervention targets, all augmented DAGs also share the same skeleton.
    \item \textbf{v-structures}: Since $\Gcal_{\MEC}^{\min} \subseteq \Gcal_{\MEC}$, by definition of the \MEC\ all unaugmented DAGs in the set share the same v-structures. Augmented DAGs form v-structures through edges from the augmented variables $E^{e,e'}$. Let $E^{e,e'} \to X_i$ be an augmented edge in $Aug_\Ical(G^*)$ with a v-structure formed by $X_j \to X_i$; thus there is no edge $E^{e,e'} \to X_j$ and the mechanism of $X_j$ is truly invariant across environments $e$ and $e'$. If a graph $G$ does not contain this v-structure, then it must orient the edge $X_i \to X_j$. Thus the mechanism $\PP^e_\Xb(X_j~|~\PA_j^{G}) \neq \PP^{e'}_\Xb(X_j~|~\PA_j^{G})$ differs due to the unblocked path from $E$ to $X_j$, $\MSS(G) > \MSS(G^*)$, and thus $G \not\in \Gcal_{\MEC}^{\min}$.
\end{itemize}
Thus, all augmented graphs in $\Gcal_{\MEC}^{\min}$ contain the same skeleton and v-structures.

($\Leftarrow$) To show that $\Gcal_{\MEC}^{\min}$ is exactly the $\Psi$-MEC, we need to show that there cannot exist a DAG $G \not\in \Gcal_{\MEC}^{\min}$ but which satisfies the $\Psi$ equivalence conditions with $G^*$.

First, any DAG $G$ not in the MEC must differ in skeleton or v-structures and thus also not be in the $\Psi$-MEC. So, assume $G$ is an element of the MEC but $G \not\in \Gcal_{\MEC}^{\min}$ and thus $\MSS(G) > \MSS(G^*)$. By~\cref{cor:edge_flip}, there exists some truly invariant $X_j$ with either \textbf{case (ii):} an unblocked path to an unconditioned parent  or \textbf{case (iii):} an unblocked path to a conditioned child. In both cases, we show that a v-structure differs between $G$ and $G^*$ which completes our proof by contradiction.

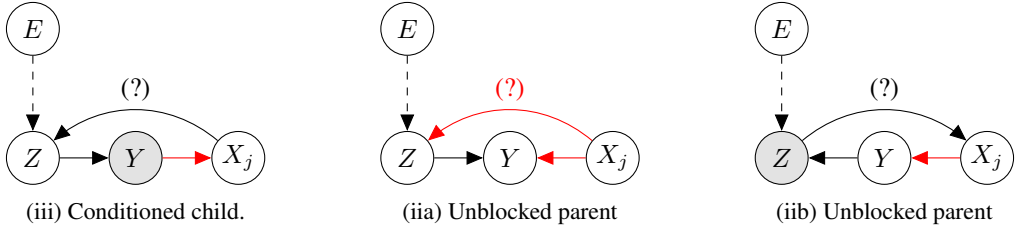
\begin{figure}[t]
    \centering
    \captionsetup[subfigure]{labelformat=empty}
    \newcommand{\xshift}{1em}
    \newcommand{\yshift}{1em} 
    \begin{subfigure}[b]{0.25\textwidth}
        \centering
        \begin{tikzpicture}
            \centering
            \node (X) [latent] {$X_j$};
            \node (Y) [latent, left=of X, xshift=\xshift, yshift=0, inner sep=0em, fill={rgb:black,1;white,8}]{$Y$};
            \node (Z) [latent, left=of Y, xshift=\xshift, yshift=0, inner sep=0em]{$Z$};
            \node (E) [latent, above=of Z]
            {$E$};
            \edge[red]{Y}{X};
            \edge[]{Z}{Y};
            \edge[dashed]{E}{Z};
            \path[bend left=40, style=<-] (Z) edge node [above]{(?)} (X);
        \end{tikzpicture}
        \caption{(iii) Conditioned child.}
    \end{subfigure}%
    \hspace{0.1\textwidth}
    \begin{subfigure}[b]{0.25\textwidth}
        \centering
        \begin{tikzpicture}
            \centering
            \node (X) [latent] {$X_j$};
            \node (Y) [latent, left=of X, xshift=\xshift, yshift=0, inner sep=0em]{$Y$};
            \node (Z) [latent, left=of Y, xshift=\xshift, yshift=0, inner sep=0em]{$Z$};
            \node (E) [latent, above=of Z]
            {$E$};
            \edge[red]{X}{Y};
            \edge[]{Z}{Y};
            \edge[dashed]{E}{Z};
            \path[bend left=40, style=<-, red] (Z) edge node [above]{(?)} (X);
        \end{tikzpicture}
        \caption{(iia) Unblocked parent}
    \end{subfigure}%
    \hspace{0.1\textwidth}
    \begin{subfigure}[b]{0.25\textwidth}
        \centering
        \begin{tikzpicture}
            \centering
            \node (X) [latent] {$X_j$};
            \node (Y) [latent, left=of X, xshift=\xshift, yshift=0, inner sep=0em]{$Y$};
            \node (Z) [latent, left=of Y, xshift=\xshift, yshift=0, inner sep=0em, fill={rgb:black,1;white,8}]{$Z$};
            \node (E) [latent, above=of Z]
            {$E$};
            \edge[red]{X}{Y};
            \edge[]{Y}{Z};
            \edge[dashed]{E}{Z};
            \path[bend left=40, style=->] (Z) edge node [above]{(?)} (X);
        \end{tikzpicture}
        \caption{(iib) Unblocked parent}
    \end{subfigure}%
    \caption{\small Cases in the proof of the $\Psi$-MEC. The candidate DAG $G$ is drawn, with edges differing from the true DAG in red and question marks above edges which may or may not exist. Shaded nodes are conditioned on, as implied in each case. The dashed line represents a d-connecting path. Cycles are naturally contradictions with the assumption of acyclicity and }
    \label{fig:psi_mec_proof}
\end{figure}

For two environments, let $X_j$ have an invariant mechanism in $G^*$, $Y$ adjacent to $X_j$, and $E$ be the environmental variable possible equal to $Z$.

\begin{itemize}
    \item \textbf{(iii) Conditioned child:} As seen in the first plot of~\cref{fig:psi_mec_proof}, the graph $G$ incorrectly contains the edge $Y \to X_j$ which leads us to condition on the child $Y$ and thus open a path from $E$ to $X_j$. If there is no edge between $Z$ and $X_j$, then the $Z - Y - X_j$ v-structure differs between $G$ and $G^*$. If there is an edge, it must orient $X_j \to Z$ or else block the path through $Y$, but then $G$ contains the cycle $Z \to Y \to X_j \to Z$ and thus the edge cannot exist at all. If $Z = E$, then of course $X_j \not\to E$ and $E \to X_j$ is not permitted under invariance of $X_j$.
    
    \item \textbf{(ii) Unconditioned parent:}  As seen in the second two plots of~\cref{fig:psi_mec_proof}, $G$ experiences a change in $X_j$ due to an unblocked path through the true parent $Y$; therefore $G$ contains $X_j \to Y$ and the path from $(E,\dots,X_j)$ contains some $Z - Y$ edge. There are two orientations of the edge $Z - Y$ to consider \textbf{Subcase (a) $Z \to Y$:} If the there is no edge between $Z$ and $X_j$ (e.g. if $Z = E$), then clearly a v-structure differs. If there is an edge, to keep $Z$ unblocked $G$ must contain $X_j \to Z$. But $X_j \to Z$ must be incorrectly directed since otherwise would imply the cycle $Z \to Y \to X_j \to Z$ in the true DAG $G^*$. Thus we have the unblocked parent $Z$ and so this brings us back to the start of case (ii); with a finite number of variables for the \textit{parent} $Z$ to be (since repetition would create a cycle) we will eventually end up in either subcase (b) or with $Z = E$ in which case the edge $X_j \to E$ cannot exist and so the v-structure $E \to Y \leftarrow X_j$ differs. \textbf{Subcase (b) $Z \leftarrow Y$:} In order to open the path through $Y$ to $X_j$, the collider $Z$ must be conditioned on in $G$ which implies $Z \to X_j$; this forms a cycle in $G$ and thus a contradiction of $G$ being a DAG.
\end{itemize}
\end{proof}

\subsection{Proof of Lemma~\ref{lem:edge_flip}}

\lemFlip*

\begin{proof}
($\Rightarrow$)
If $\PP(X_j~|~\Zb)$ changes across environments $E$, then $X_j \not\independent E~|~\Zb$. The global Markov property of the CGM states that d-separation implies conditional independence, and thus by the contra-positive the d-connectedness relationship follows. 

($\Leftarrow$)
d-connectedness implies conditional dependence by faithfulness, and thus a change across environments.
\end{proof}

\subsection{Proof of Corollary~\ref{cor:edge_flip}}

\corFlip*

\begin{proof}
By Lemma 4.3, $\PP(X_j~|~\Zb)$ changes iff $X_j \not\independent_{G_{\Xb \cup E}} E~|~\Zb$ (d-connection) and equivalently, iff there is an unblocked path $(E,\dots,X_j)$ in $G_{\Xb \cup E}$. We assume a generic path and through casework establish that it is unblocked if and only if one of cases (i), (ii), or (iii) holds. The casework is visualized in Figure 7.

Either $(E,\dots,X_j)$ is just $E \to X_j$ [case (i)] or there must exist $W$ such that $(E,\dots,W,X_j)$ and $W$ is either (a) a collider or (b) not a collider.

(a) If $W$ is a collider, it necessarily a child of $X_j$ The collided path in unblocked iff $W \not\independent_{G_{\Xb \cup E}} E~|~\Zb\setminus W$ and some descendant $W' \in \DE_W^G$ of $W$ is conditioned on. Thus $W' \in \Zb \subset \PA_j^G \cup \CH_j^G$. Without loss of generality, assume $W'$ is the closest descendant to $W$ and hence the path $(W,\dots,W')$ is unblocked by $\Zb$. $W'$ cannot be a parent of $X_j$, else induce the cycle $(X_j, W, \dots, W', X_j)$, and so must be a child and case (iii) holds. Specifically, there exists $W'$ such that $W' \not\independent_{G_{\Xb \cup E}} E~|~\Zb\setminus W'$ and $W'$ is a child in $\Zb$.

(b) If $W$ is not a collider, by definition the path is unblocked iff $W \not\independent_{G_{\Xb \cup E}} E~|~\Zb$ and $W \not \in \Zb$. If $W$ is a parent of $X_j$ (since they are adjacent), case (ii) holds. If $W$ is a child of $X_j$, because $E$ has an outgoing edge there must exist some collider $C$ on the path such that $(E,\dots,C,\dots,W,X_j)$ and the subpath from $W$ to $C$ is directed into $C$. The condition $W \not\independent_{G_{\Xb \cup E}} E~|~\Zb$ holds iff some descendant $W'$ of $C$ is in $\Zb$. As before, $W'$ cannot be a parent of $X_j$ or else induce a cycle, and so it must be a child and case (iii) holds.

\begin{figure}[t]
    \centering
    \captionsetup[subfigure]{labelformat=empty}
    \newcommand{\xshift}{1em}
    \newcommand{\yshift}{1em} 
    \begin{subfigure}[b]{0.24\textwidth}
        \centering
        \begin{tikzpicture}
            \centering
            \node (X) [latent] {$X_j$};
            \node (W) [latent, left=of X, xshift=\xshift, yshift=0, inner sep=0em] 
            {$W$};
            \node (E) [latent, left=of W, xshift=\xshift]
            {$E$};
            \edge[]{W}{X};
            \edge[dashed]{E}{W};
        \end{tikzpicture}
        \caption{(a) $W$ is a parent.}
    \end{subfigure}%
    \hspace{0.02\textwidth}
    \begin{subfigure}[b]{0.35\textwidth}
        \centering
        \begin{tikzpicture}
            \centering
            \node (X) [latent] {$X_j$};
            \node (W) [latent, left=of X, xshift=\xshift, yshift=0, inner sep=0em] 
            {$W$};
            \node (W2) [latent, below=of W, xshift=0, yshift=\yshift, fill={rgb:black,1;white,8}, inner sep=0em]
            {$W'$};
            \node (E) [latent, left=of W, xshift=\xshift]
            {$E$};
            \edge[]{X}{W};
            \edge[dashed]{W}{W2};
            \edge[dashed]{E}{W};
            \path[bend right=30, style=<-] (W2) edge node {} (X);
        \end{tikzpicture}
        \caption{(b) $W$ is a child and collider.}
    \end{subfigure}%
    \hspace{0.02\textwidth}
    \begin{subfigure}[b]{0.35\textwidth}
        \centering
        \begin{tikzpicture}
            \centering
            \node (X) [latent] {$X_j$};
            \node (W) [latent, left=of X, xshift=\xshift, yshift=0, inner sep=0em] 
            {$W$};
            \node (W2) [latent, left=of W, xshift=\xshift, yshift=0]
            {$C$};
            \node (E) [latent, left=of W2, xshift=\xshift]
            {$E$};
            \node (V) [latent, below=of W2, yshift=\yshift, fill={rgb:black,1;white,8}]
            {$W'$};
            \edge[]{X}{W};
            \edge[dashed]{W}{W2};
            \edge[dashed]{E}{W2};
            \path[bend right=30, style=<-] (V) edge node {} (X);
            \edge[dashed]{W2}{V};
        \end{tikzpicture}
        \caption{(b) $W$ is a child but not a collider.}
    \end{subfigure}%
    \caption{\small Cases from the proof of~\cref{cor:edge_flip}. Case (a) iff case (ii). Case (b) induces two subcases either of which occur iff case (iii).}
    \label{fig:edge_flip_proof}
\end{figure}
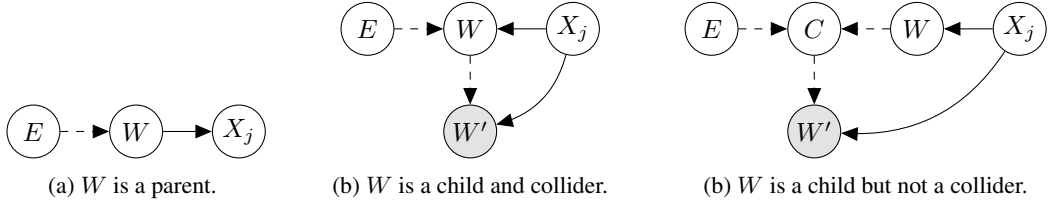

\end{proof}

\subsection{Proof of Lemma~\ref{lem:consis_mech}}

\lemConsistency*

\begin{proof}
By Assumption~\ref{assumption:shared_mechs}, the distribution $\PP^e_\Xb$ in each environment $e \in \{1,\dots, n_{\Ecal}\}$ is the result of changing mechanisms from some underlying yet unknown distribution $\PP_\Xb$. Let $\Delta^{e,e'}(X_j)$ denote the event $\II[\PP^e_\Xb(X_j~|~\PA_j^{G^*}) \neq \PP^{e'}_\Xb(X_j~|~\PA_j^{G^*})]$ that the mechanism of variable $X_j$, with respect to the true graph $G^*$, changes across environments $e$ and $e'$. Abbreviate $\rho^{e,e'}_j := \Pr[\Delta^{e,e'}(X_j) = 1]$.

Since $\PA_j^{G^*} \neq \PA_j^G$ and $G$ shares the same skeleton as $G^*$, at least one edge must be oriented incorrectly in $G$. In the conditioning set $\PA_j^G$ according to the incorrect graph $G$, there thus exists either an unconditioned true parent $Z \in \PA_j^{G^*} \setminus \PA_j^G$ or a conditioned-upon true child $Z \in \CH_j^{G^*} \cap \PA_j^G$. By \cref{cor:edge_flip}, we know that if $Z$ is not d-separated from $E$ in the augmented graph, then the conditional $\PP(X_j~|~\PA_j^G)$ changes across $E$. This occurs at least if the mechanism of $Z$ directly changes, e.g. there is the edge $E \to Z$ in the augmented graph.

Consider first the case of two environments. We know from \cref{prop:min_shift} that $\MSS_j(G^*; \Pcal)$ cannot be greater than $\MSS_j(G; \Pcal)$, and will be less if the mechanism of $X_j$ remains invariant while the mechanism of $Z$ changes. By the assumption of independent changing mechanisms,
\begin{align*}
    &\Pr[\MSS_j(G^*; \{\Dcal^1, \Dcal^2\}) = \MSS_j(G; \{\Dcal^1, \Dcal^2\})]\\
    &= 1 - \Pr[\MSS_j(G^*; \{\Dcal^1, \Dcal^2\}) < \MSS_j(G; \{\Dcal^1, \Dcal^2\})]\\
    &\leq 1 - \Pr[\Delta^{1,2}(X_j) = 0, \Delta^{1,2}(Z) = 1]\\
    &= 1 - \Pr[\Delta^{1,2}(X_j) = 0]\Pr[\Delta^{1,2}(Z) = 1]\\
    &= 1 - (1 - \rho^{1,2}_j) \rho^{1,2}_Z
\end{align*}

Given $n_{\Ecal} > 2$ environments, it follows that
\begin{align*}
    & \,\,\, \Pr[\MSS_j(G^*; \Pcal) = \MSS_j(G; \Pcal)]\\
    &= \Pr\left[\bigcap_{e, e'>e} \MSS_j(G^*, \{\Dcal^e, \Dcal^{e'}\}) = \MSS_j(G, \{\Dcal^e, \Dcal^{e'}\})\right] \\
    &\leq \Pr\left[\bigcap_{e \in \{1,\dots,\lfloor\Ecal/2\rfloor\}} \MSS_j(G^*, \{\Dcal^{2e-1}, \Dcal^{2e}\}) = \MSS_j(G, \{\Dcal^{2e-1}, \Dcal^{2e}\})\right]\\
    &= \prod_{e \in \{1,\dots,\lfloor\Ecal/2\rfloor\}} \Pr\left[ \MSS_j(G^*, \{\Dcal^{2e-1}, \Dcal^{2e}\}) = \MSS_j(G, \{\Dcal^{2e-1}, \Dcal^{2e}\})\right]\\
    &\leq \prod_{e \in \{1,\dots,\lfloor\Ecal/2\rfloor\}} \left(1 - (1 - \rho^{2e-1,2e}_j) \rho^{2e-1,2e}_Z\right)\\
    &\leq \left(1 - \min_{e \in \{1,\dots,\lfloor\Ecal/2\rfloor\}}(1 - \rho^{2e-1,2e}_j) \rho^{2e-1,2e}_Z\right)^{\lfloor n_{\Ecal} / 2\rfloor}.
\end{align*}
Since $Z$ is arbitrary, we construct an upper bound using the worst case, in which a variable frequently or rarely changes.
\begin{align*}
    1 - \min_{e \in \{1,\dots,\lfloor\Ecal/2\rfloor\}}(1 - \rho^{2e-1,2e}_j) \rho^{2e-1,2e}_Z
    &\leq 1 - (1 - \max_{e \in \{1,\dots,\lfloor\Ecal/2\rfloor\}} \rho^{2e-1,2e}_j) \min_{e \in \{1,\dots,\lfloor\Ecal/2\rfloor\}} \rho^{2e-1,2e}_Z\\
    &\leq 1 - (1 - \max_{e, e' \neq e} \rho^{e,e'}_j) \min_{e, e' \neq e} \rho^{e,e'}_Z\\
\end{align*}
and so to acquire the final bound with simplified notation, for any variable $X_i$ denote the minima and maxima of $\rho^{e,e'}_i$ across any two environments with $\rho_i^{\lb}$ and $\rho_i^{\ub}$, respectively.
\end{proof}

\subsection{Proof of Theorem~\ref{thm:consi_dags}}
\thmConsistency*

\begin{proof}
Since $\Pr[\Gcal_{\MEC}^{\min} = \{G^*\}] = 1 - \Pr[\Gcal_{\MEC}^{\min} \neq \{G^*\}]$ and by~\cref{lem:consis_mech},
\begin{align*}
    \Pr[\Gcal_{\MEC}^{\min} \neq \{G^*\}]
    &= \Pr\left[\bigcup_{G \in \Gcal_{\MEC} \setminus \{G^*\}} \MSS(G^*; \Pcal) = \MSS(G; \Pcal) \right]\\
    &\leq \sum_{G \in \Gcal_{\MEC}} \Pr\left[\MSS(G^*; \Pcal) = \MSS(G, \Dcal)\right]\\
    &\leq \sum_{G \in \Gcal_{\MEC}} \Pr\left[\sum_j \MSS_j(G^*; \Pcal) = \sum_j \MSS_j(G, \Dcal)\right]\\
    &= \sum_{G \in \Gcal_{\MEC}} \Pr\left[\sum_j \MSS_j(G^*; \Pcal) = \sum_j \MSS_j(G, \Dcal)\right]\\
    & = \sum_{G \in \Gcal_{\MEC}} \Pr\left[\bigcap_j \MSS_j(G^*; \Pcal) = \MSS_j(G, \Dcal)\right]\\
    & \leq \sum_{G \in \Gcal_{\MEC}} \min_j \Pr\left[\MSS_j(G^*; \Pcal) = \MSS_j(G, \Dcal)\right]\\
    &\leq \sum_{G \in \Gcal_{\MEC}} \min_j \left(1 - (1 - \rho_j^{\ub}) \min_{i} \rho_i^{\lb}\right)^{\lfloor n_{\Ecal} / 2\rfloor}\\
    &\leq \sum_{G \in \Gcal_{\MEC}} \left(1 - (1 - \min_j \rho_j^{\ub}) \min_{i} \rho_i^{\lb}\right)^{\lfloor n_{\Ecal} / 2\rfloor}\\
    &= |\Gcal_{\MEC}| \left(1 - (1 - \min_j \rho_j^{\ub}) \min_{i} \rho_i^{\lb}\right)^{\lfloor n_{\Ecal} / 2\rfloor}.
\end{align*}
\end{proof}

\clearpage
\section{Details of assumptions and methods}\label{appendix:details}

\subsection{Pseudo causal sufficiency and the Independent Causal Mechanisms (ICM) assumption}

\citet{huang2020causal} introduced the idea of psuedo-causal sufficiency (\cref{assumption:pseudo_suff}) and provide a useful discussion on its relation to results on soft interventions by \citet{eberhardt2007interventions}. \citet{guo2022causal} provide a useful formalization of multi-environment data, specifically through a
plate-notation representation.
An environment $e$ specifies parameters of the causal mechanisms in the CGM over $\Xb$; we can think of environments as encapsulating specific experimental settings, or broad contexts such as climate or time~\citep{mooij2020joint}. 
Under the context of $e$, there is some distribution $\PP^e_\Xb$ and we observe a dataset sampled i.i.d.
The ICM assumption tells us that the parameters for each causal mechanism in an environment are chosen or sampled independently, and thus in the augmented CGM the edges from $E$ appear independently.

Within each environment, i.e., when we condition on $E$, the environmental parameters are fixed; thus we are in the typical i.i.d.\ setting and causal sufficiency is implied by the CGM. However, without conditioning on $E$, the environmental parameters are not fixed and across two samples either all remain the same (if the samples are in the same environment) or some change. This dependence between samples through the parameters defined by $E$ is the result of $E$ being a confounder; thus causal sufficiency cannot hold over $\Xb$ without conditioning on $E$. Because $E$ is not necessarily a true causal variable but rather an environment encoding a fixed set of unmeasured variables, \citet{huang2020causal} call it a \textit{pseudo-confounder}. It is worth noting that the second stage of the approach of \citet{huang2020causal} relies on a novel kernel-based test, which computes a measure of mechanism dependence across all samples. They correctly compare the test statistics rather than examine \textit{p}-values because the dependence between samples without controlling for environment would lead to a samll \textit{p}-value even if the mechanisms were independent.

\subsection{The \textit{p}-values ``soft" score}\label{appendix:soft_score}

We provide further details on the \textit{p}-value ``soft" score. Recall the modified score definition to be
\begin{equation*}
    \textstyle
    \widehat{\MSS}_j(G; \Dcal) = \sum_{e=1, e'>e}^{n_{\Ecal}} \left[1 - \text{\textit{p}-value}\left(\PP^e(X_j|\PA_j^G) \neq \PP^{e'}(X_j|\PA_j^G)\right)\right].
\end{equation*}

Using a test of equality of distribution, we calculate a test statistic; at a pre-specified level $\alpha$, if the test is well specified~\citep{shah2018hardness}, the one-sided \textit{p}-value is valid and corresponds to the probability under the null hypothesis $H_0: \PP^e(X_j|\PA_j^G) = \PP^{e'}(X_j|\PA_j^G)$ of a test statistic as large or larger than the observed test statistic. 

If a mechanism changes, a powerful test should yield a small \textit{p}-value and thus a term close to $1$ in the summation, similar to the ``hard" score. If a mechanism doesn't change, since \textit{p}-values are uniformly distributed in $[0, 1]$ under the null hypothesis, the term in the sum would be similarly uniformly distributed. With enough variables and environments, the variance of the sum of random uniform variables will decrease and the behavior of the score will be dominated by the \textit{p}-values under the alternatives. It must be noted that the \textit{p}-values are not independent, as some will use data from the same environments.

\clearpage
\section{Supporting experiments}\label{appendix:supporting_experiments}

\subsection{F1 Scores}

In the main text, we present the precision and recall separately as they are important metrics to consider. Here, we also present the F1 score which equals their harmonic mean and conveniently provides a single numeric summary. As before, \cref{fig:sim_empirical_rates_mss_f1} provides a simulated comparison of the \MSS\ estimator using various equality of distribution tests, while \cref{fig:sim_empirical_rates_contrast_f1} provides a simulated comparison of \MSS\ to other approaches in the literature which we discuss heavily in the main body.

\begin{figure}[t]
    \centering
    \includegraphics[width=\textwidth]{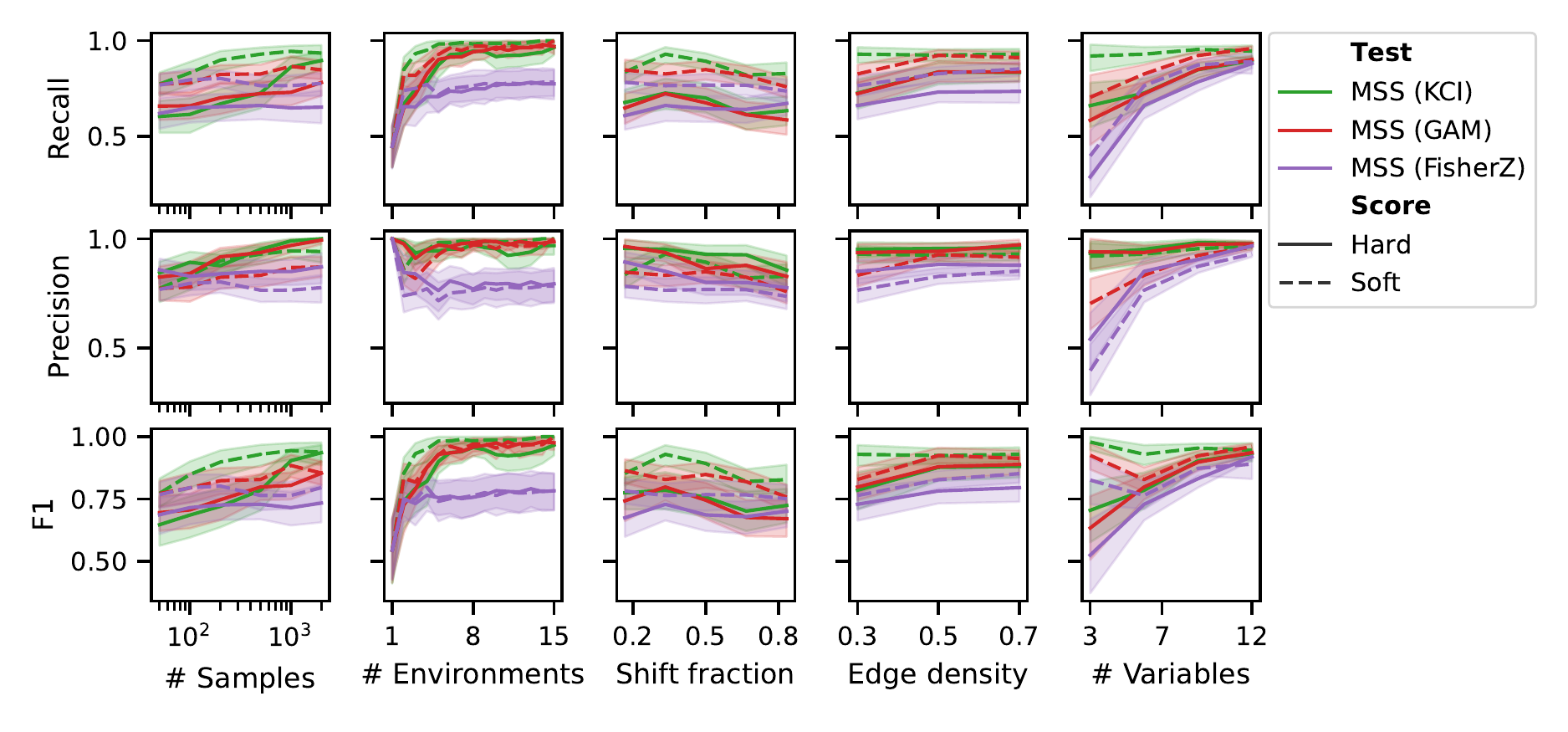}
    \caption{\small Nonparametric hypothesis tests perform well in nonlinear simulations, and soft scores succeed. Notably, recall converges with increasing environments. KCI appears to best balance high recall and precision.}
    \label{fig:sim_empirical_rates_mss_f1}
\end{figure}

\begin{figure}[t]
    \centering
    \includegraphics[width=\textwidth]{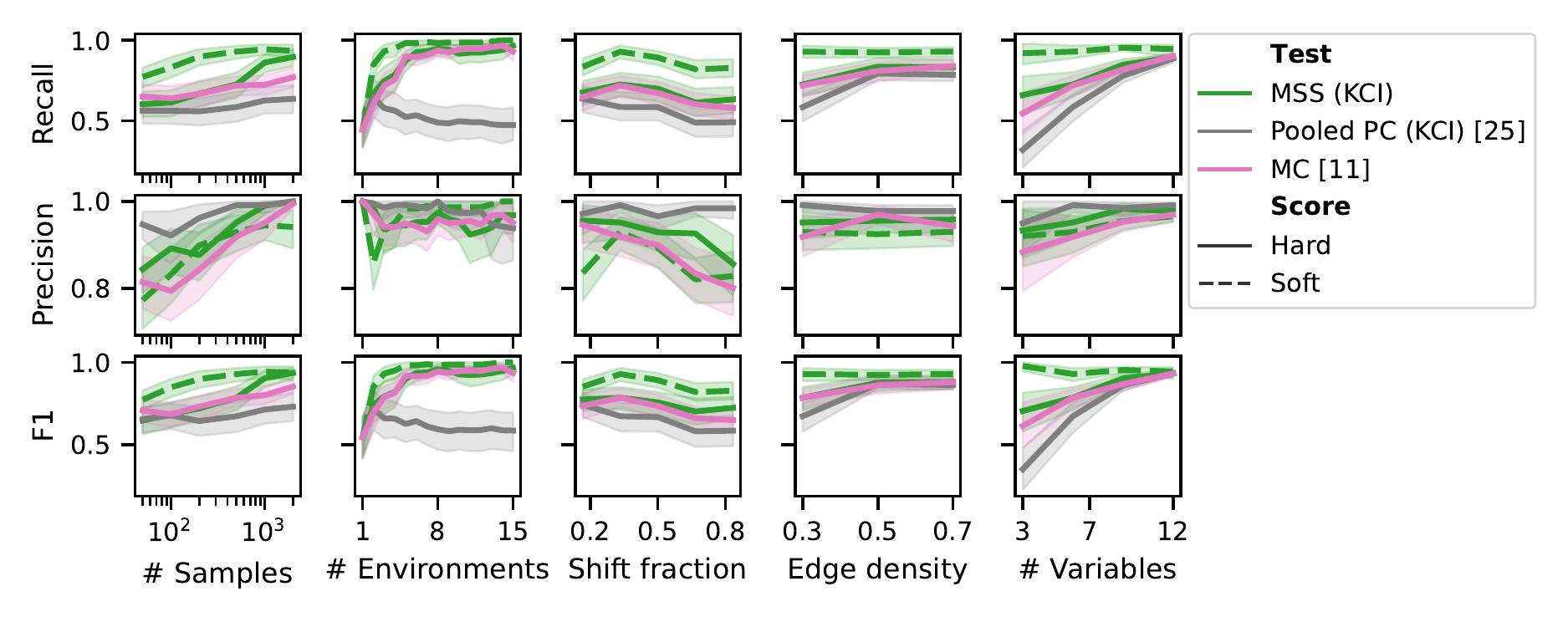}
    \caption{\small Pairwise approaches improve with more environments, unlike Pooled PC. Although the parametric MC works surprisingly well across settings, the nonparametric \MSS\ with KCI has superior precision and recall.}
    \label{fig:sim_empirical_rates_contrast_f1}
\end{figure}

\subsection{Additional simulations}

\subsubsection{\MSS\ improves upon pooled PC across random graph models.} Previously in~\cref{fig:sim_oracle_rates}, we demonstrated that the \MSS\ improves upon pooled PC under an oracle test in simulation settings where DAGs were sampled according to the Erd\H{o}s-R\`enyi (ER) random DAG model; in the ER model, each edge is sampled i.i.d. with a fixed probability. Here, we expand upon that simulation by further comparing rates under the Barabasi-Alberts (Hub) scale-free random DAG model; in the Hub model, vertices are sequentially added to the DAG and edges are connected to previous vertices with probability proportional to their existing number of edges.

As seen in~\cref{fig:sim_oracle_rates_dag_models}, we vary the same parameters as before but compare the two oracle methods across both DAG models this time. First, note that at $1$ environment the Hub model exhibits greater recall, indicating that the observational MEC of the Hub graph has fewer unoriented edges than that of the ER graph. Thus, the gap between the methods is lessened in a Hub model as compared to an ER model. Otherwise, the qualitative trends between the two methods are almost identical across the two random graph models. The \MSS\ appears at least mildly robust to the graph structure.

\begin{figure}[t]
    \centering
    \includegraphics[width=\textwidth]{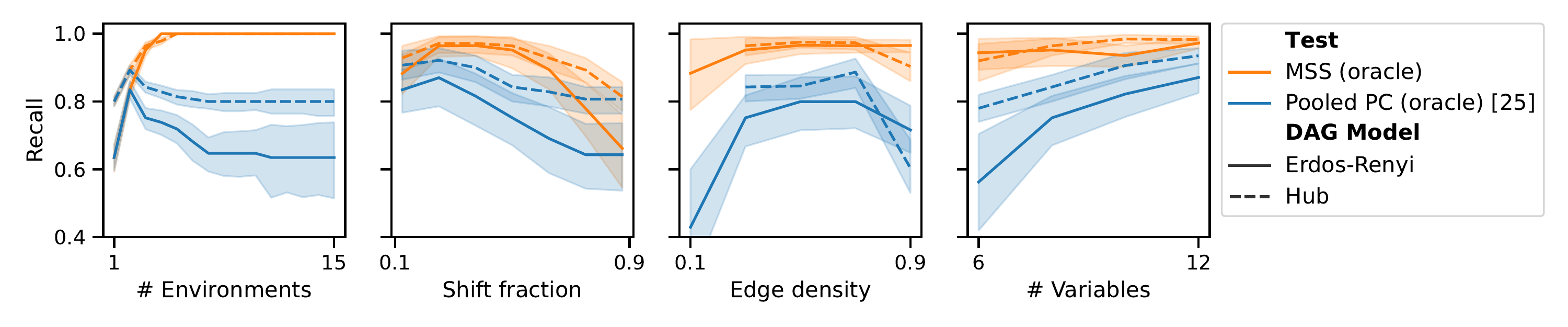}
    \caption{Oracle \MSS\ improves upon pooled PC across both random and hub random graph models.}
    \label{fig:sim_oracle_rates_dag_models}
\end{figure}

\subsubsection{Differences between oracle \MSS\ and pooled PC are most pronounced on sparser and smaller DAGs.} Although~\cref{fig:sim_oracle_rates} highlighted the most important trends of oracle methods in certain fixed settings, for completeness we examine rates of recall across additional fixed settings. As before, we sample DAGs from an Erd\H{o}s-R\`enyi distribution and in five environments vary the DAG density, shift fraction, and number of variables. The set of experimental results shown in~\cref{fig:sim_oracle_rates_relplot} convey broader trends in oracle recall rates as multiple variables change across row, column, and the x-axis. We do not vary the number of environments as we can only visualize three variables through our plot and the trend across environments is best understood from the theory. Differences in oracle recall rates are less pronounced on graphs with more variables and when the density of edges is large. Note that we only compare five environments here and that with more environments, differences will again be more pronounced; with enough environments, pooled PC cannot learn more than the MEC.

\begin{figure}[t]
    \centering
    \includegraphics[width=\textwidth]{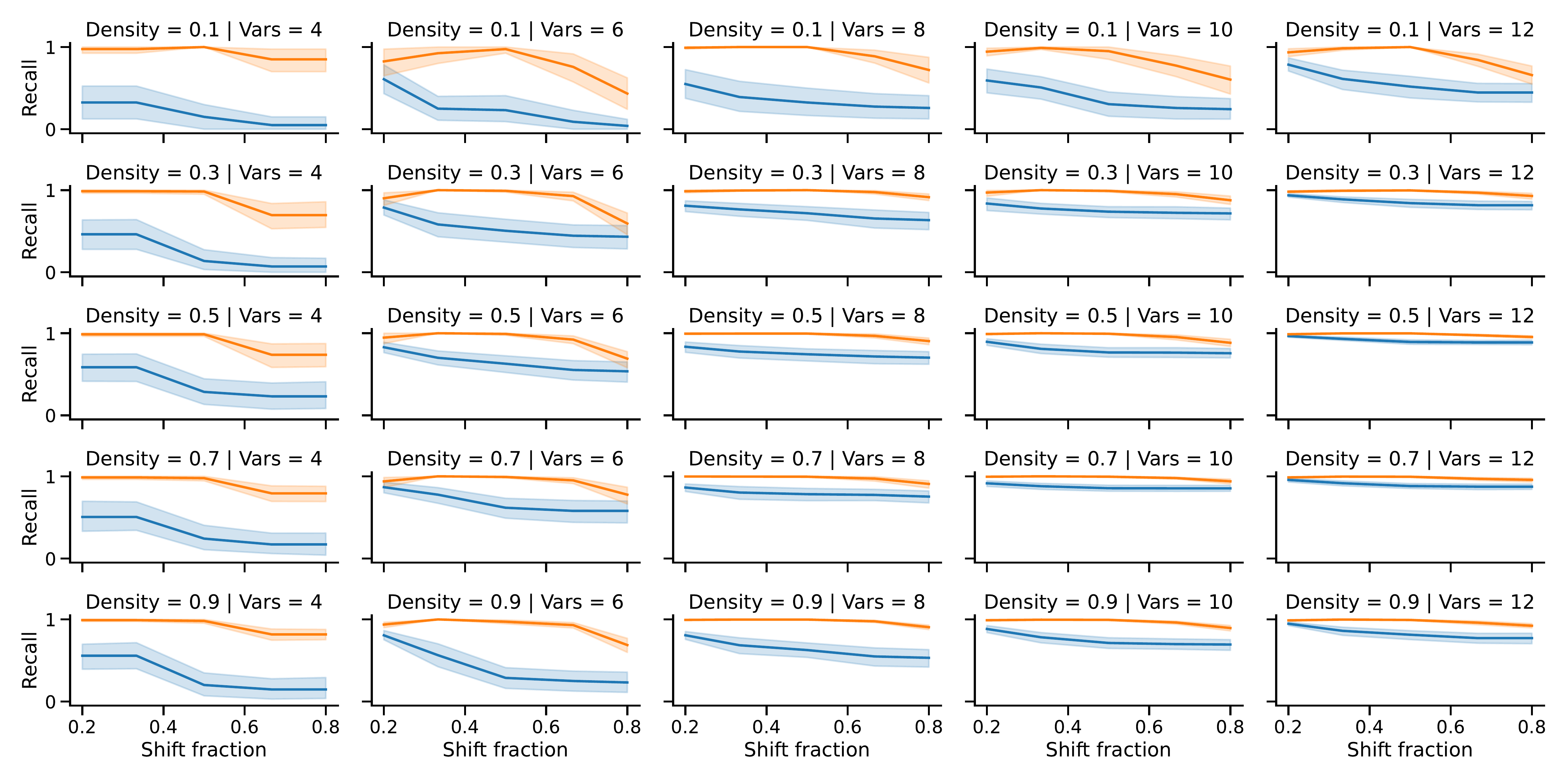}
    \caption{Differences between oracle \MSS\ and pooled PC on 5 environments are most pronounced on smaller and sparse DAGs. For readability, the legend is omitted but we refer back to the same legend in~\cref{fig:sim_oracle_rates}. Specifically, the orange line corresponds to the \MSS\ while the blue line corresponds to pooled PC. Only five environments are sampled, but differences would be exacerbated with additional environments.}
    \label{fig:sim_oracle_rates_relplot}
\end{figure}

\subsubsection{KCI-based approaches perform the best on bivariate CGMs.} We previously examined the empirical rates of recall and precision across various simulated settings, highlighting when methods succeed and fail. Due to the size and complexity of those studied DAGs, not all results are fully interpretable. We seek to further understand empirical performance through the simple bivariate DAG, which contains no indirect effects and few possible interventions to analyze. Specifically, on the DAG $X_1 \to X_2$, shifts can occur to either $\PP(X_1)$, $\PP(X_2 | X_1)$, neither mechanism, or both mechanisms; the first two shifts are sparse and provide oracle identifiability of the true DAG. Following the simulation setup described by~\cref{eq:sim_dist} on the DAG $X_1 \to X_2$, we simulate data from one base environment and from one interventional environment subject to one of the four possible shifts. Each environment has $500$ samples. We compare the four different \MSS\ methods using parametric and nonlinear equality of distribution tests and conditional independence tests. We also compare the pooled PC and MC approaches. Since only two environments are compared, we conjecture \MSS\ and pooled PC to be equivalent under an oracle test.

\begin{figure}[!htb]
    \centering
    \includegraphics[width=\textwidth]{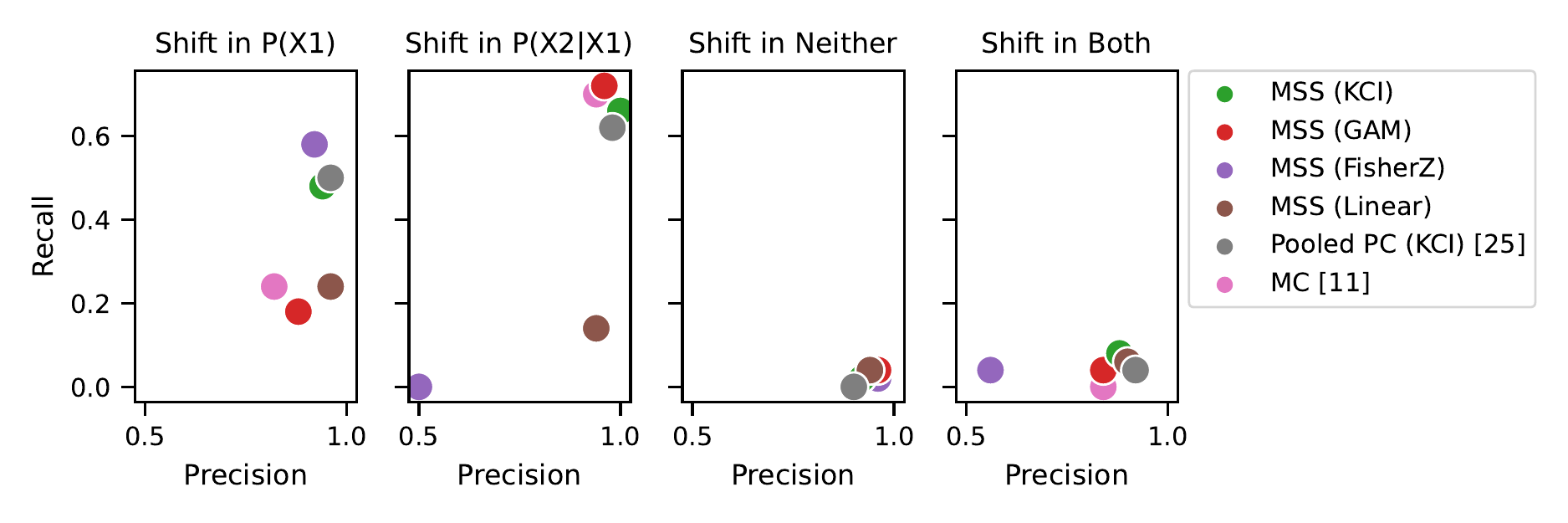}
    \caption{\small KCI-based tests perform well for causal identification in a bivariate CGM. $500$ samples are drawn from a base environment, and a second environment subject to one of four shifts given by the columns; the first two columns are sparse shift settings where we have identifiability. The precision and recall are plotted for each of the methods. It appears that the two KCI-based methods (\MSS\ and pooled PC) achieve the best balance of high power in both sparse shift settings while maintaining high precision in both non-sparse settings. Other methods either have drastically lower recall or precision close to $0.5$, indicating random guessing.}
    \label{fig:sim_bivariate_power}
\end{figure}

\begin{figure}[!htb]
    \centering
    \includegraphics[width=\textwidth]{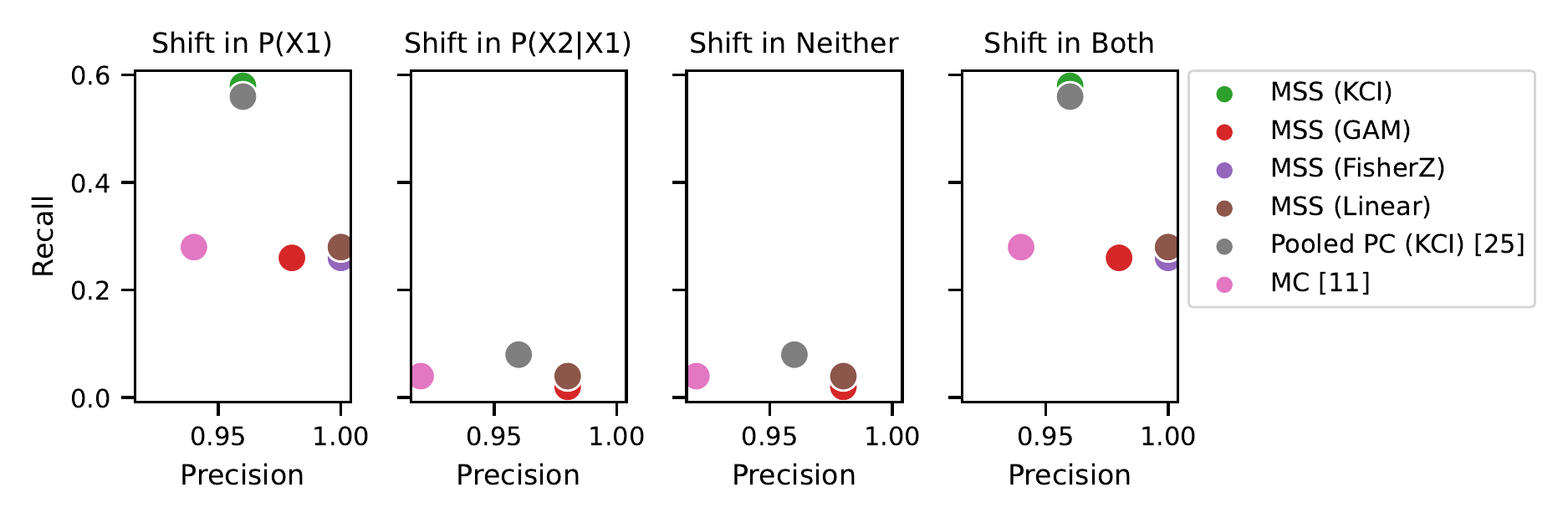}
    \caption{\small \looseness-1 \rebuttal{Test performance is dependent on the data generating process. In the same experimental setup as in} \cref{fig:sim_bivariate_power}\rebuttal{, we modify the simulation such that the noise is multiplicative. As we see, although the marginal change in $\PP(X_1)$ can still be detected, the conditional distribution tests are not powerful enough for shifts in $\PP(X_2 | X_1)$. However, notice that this means that the true graph is still identifiable when both mechanisms shift. This is contrary to theory about the oracle, \textit{exactly because the finite sample tests are not powerful enough.}}}
    \label{fig:sim_bivariate_multiplic_power}
\end{figure}

\looseness-1 Results are shown in~\cref{fig:sim_bivariate_power}. For reference, an oracle method would have recall $1$ in the first two (sparse shift) columns and $0$ in the other two columns.
Although Fisher-Z has high precision when $\PP(X_1)$ shifts, it has chance precision when $\PP(X_2|X_1)$ shifts.
The KCI methods maintain high precision while the precision of other methods is comparable or noticeably lower. With respect to recall, when neither or both mechanisms change and thus the DAG is not identifiable, all methods correctly have low recall. However, when just the marginal $\PP(X_1)$ changes, the KCI methods dominate in recall whereas the linear \MSS\, MS, and GAM approaches have lower recall, implying they are less often able to detect a change in the reverse conditional $\PP(X_1~|~X_2)$. When the mechanism $\PP(X_2~|~X_1)$ shifts, all methods have high recall. Notably, the linear \MSS\ performs much worse than MC. The only difference between them, however, is that MC explicitly counts how many parameters change while the linear \MSS\ simply tests if there is a change; this does come at a slight cost in precision for MS though. \rebuttal{In a small extension, we additionally run this experiment under a multiplicative noise data generating process. Those results are seen in~\mbox{\cref{fig:sim_bivariate_multiplic_power}} and highlight both that it is necessary to have access to a powerful hypothesis test and yet failing to reject the null can promote sparsity and lead to identifiability under dense changes.}.

\subsection{Application to real-world cytometry data}
\label{appendix:cytometry}

Although simulations with known ground truth provide useful reference points for comparing methods and evaluating empirical performance, in practice we are interested in studying real data with no known truth and additional challenges such as violated assumptions. To illustrate how one may apply our method in practice, and to analyze empirical performance on real data, we conduct a case-study application of \MSS\, for causal discovery on a well-studied cytometry dataset~\citep{sachs2005causal}.

\subsubsection{Background}

\citet{sachs2005causal} present a detailed study of the application of Bayesian discovery approaches to learning a causal DAG among protein concentration levels in human immune system cells. In each of $9$ experimental environments subject to different perturbations, approximately $700$-$900$ sample measurements were collected; each sample is the concentration levels of $11$ proteins from a cell. The learned \textit{Sachs network} is a proposed DAG among the variables, which the authors discuss and contrast with a domain-expert network from the ``biologist's view''. This cytometry data has subsequently been studied in further detail~\citep{eaton2007exact, ramsey2018fask, mooij2020joint}. As is often pointed out, various assumptions may be violated, including the acyclicity assumption, since protein networks contain strong feedback loops~\citep{mooij2020joint}. As such, it is not necessarily useful to treat the Sachs network as a ground truth and there are numerous relationships and orientations which should rightfully be questioned~\citep{mooij2020joint}. Results must be considered in the context of domain-knowledge and various existing studies in the literature.

\subsubsection{Experimental setup}

In order to focus on learning edge orientations of undirected edges in the MEC, rather than learning the MEC, we start from the Sachs network despite the potential caveats. The Sachs network from~\citet{sachs2005causal} is a DAG on the $11$ variables with $17$ edges. We compute the \textit{Sachs MEC} which contains all DAGs which are Markov equivalent to the Sachs network. The Sachs MEC has no directed edges, and thus is simply the undirected skeleton of the graph in~\cref{fig:unified_empirical}. Starting from the Sachs MEC makes our results more interpretable in light of previous works and saves costly computation of the MEC. In practice, we would advise starting from the pooled PC MEC; based on the number of environments and observed density of changes, we would not expect this to orient any edges beyond the observational MEC.

Starting from the Sachs MEC, we apply the \MSS\ using the KCI test, which appears to perform the best among plug-in estimators for \MSS\ in our simulations. Since the feature distributions are heavily skewed, we preprocess them by taking their natural logarithm~\citep{ramsey2018fask}. Among all DAGs in the Sachs MEC, the DAG with the uniquely minimal number of shifts exhibits approximately $8.9$ shifts per pair of environments; this is relatively high but satisfies the assumption of sparse shifts. Violations to assumptions may lead to more shifts than expected.

\subsubsection{Results and comparison to related works}

The DAG which minimizes the \MSS\ is the unique minimizer and is visualized in~\cref{fig:unified_empirical}. An edge in black is oriented in the same direction as in the Sachs network, while an edge in red is oriented in the opposite direction. Overall, the majority of edges match the Sachs network. The edges which do not match, however, reflect ambiguities and flawed assumptions. We list each edge which does not match the Sachs network and discuss why this might be the case in light of existing work.
\begin{itemize}
    \item PIP2 $\to$ PIP3: As illustrated in \citet{sachs2005causal}, these two proteins are actually cyclically related through bi-directed edges in the accepted ``biologist's view''. Indeed, PIP2 $\to$ PIP3 was similarly recovered by an analysis of \citet{ramsey2018fask}, detailed in their Figure 11.
    \item Mek $\to$ Raf: that this edge does not match the Sachs network is discussed heavily by~\citet{mooij2020joint} who point to it as a fundamental flaw of the Sachs network. The Mek $\to$ Raf edge is indeed found by many other methods~\citep{eaton2007exact, ramsey2018fask, mooij2020joint}.
    \item The PKA, PKC, P38 triangle: although there is not a detailed discussion of these variables in other studies, there is strong ambiguity in the edge directions among approaches. Notably, \citet{mooij2020joint} similarly find strong evidence in their approach for the edge P38 $\to$ PKC while \citet{ramsey2018fask} and \citet{eaton2007exact} find evidence for PKA $\to$ PKC. However, all other approaches agree that the edge P38 $\to$ PKA is incorrect. Although we do not explore further, it is worth noting that the 3rd minimal \MSS\ DAG (not shown) is the same as the one shown, except it contains the presumed correct edge PKA $\to$ P38.
\end{itemize}

As an additional note, we see in~\cref{fig:unified_empirical} that the edge Mek $\to$ Erk is correctly recovered. \citet{sachs2005causal} similarly recover this well-known connection and point to it as strong evidence of success. In contrast, neither \citet{eaton2007exact}, \citet{ramsey2018fask}, nor \citet{mooij2020joint} recover the edge with their methods. Indeed, \citet{ramsey2018fask} specifically discuss how their approach incorrectly missed this edge, potentially the result of signal being lost when all the data is pooled. Pooled PC would face a similar issue, exacerbated by additional environments, while the pairwise comparisons of the \MSS\ help to avoid this issue.

\end{document}